\documentclass[sigconf]{acmart}

\AtBeginDocument{%
  }

\setcopyright{acmlicensed}
\copyrightyear{2026}
\acmYear{2026}
\acmDOI{XXXXXXX.XXXXXXX}
\acmConference[MM '26]{The 34th ACM International Conference on Multimedia}
  {November 10--14, 2026}{Rio de Janeiro, Brazil}
\acmBooktitle{Proceedings of the 34th ACM International Conference on
  Multimedia (MM '26), November 10--14, 2026, Rio de Janeiro, Brazil}
\acmISBN{978-1-4503-XXXX-X/2026/11}

\acmSubmissionID{2254}

\usepackage{algorithm}
\usepackage{algorithmic}
\usepackage{booktabs}
\usepackage{multirow}
\usepackage{tabularx, booktabs} 
\usepackage{threeparttable}
\usepackage{diagbox}  
\usepackage{amsmath}
\usepackage{amsthm, mathtools} 
\usepackage{subcaption}
\usepackage[percent]{overpic}
\usepackage{pifont}
\usepackage{xcolor}  
\usepackage{thmtools} 
\usepackage{adjustbox}
\usepackage[table]{xcolor}  
\definecolor{midgray}{HTML}{E6E6E6}  
\definecolor{EC805A}{HTML}{EC805A}
\definecolor{4575b4}{HTML}{4575b4} 
\usepackage{graphicx}
\usepackage{subcaption}
\usepackage{placeins}
\usepackage{pifont}
\usepackage{contour}            
\definecolor{staryellow}{RGB}{255,205,0}
\contourlength{0.12ex}          

\newcolumntype{Y}{>{\centering\arraybackslash}X}
\newcolumntype{L}{>{\raggedright\arraybackslash}X}
\definecolor{MorandiBlue}{HTML}{2A55A4}
\definecolor{vpdr}{HTML}{D7F6FF}  
\usepackage{diagbox}  

\definecolor{lightblue}{HTML}{D7F6FF} 
\definecolor{midgray}{HTML}{E6E6E6}  
\definecolor{orange}{HTML}{EC805A} 

\usepackage{tikz}
\usepackage[utf8]{inputenc}

\usepackage{enumitem}
\usepackage{float}

\newcommand{\blackcircnum}[1]{%
  \tikz[baseline=-0.75ex]{
    \node[
      shape=circle,
      draw=black,
      fill=black,
      text=white,
      inner sep=0.5pt,
      font=\scriptsize\bfseries,
      minimum size=0.8em
    ] (char) {#1};
  }%
}

\makeatletter

\begin{document}

\title[Towards Privacy-Preserving Federated Prompt Tuning under Data Heterogeneity: \\ A Subspace-Decomposed Expert Approach]
{Towards Privacy-Preserving Federated Prompt Tuning under Data Heterogeneity: A Subspace-Decomposed Expert Approach}
%
\author{Yuhua Wang}
\affiliation{%
  \institution{School of Artificial Intelligence,
  Beihang University}
  \city{Beijing}
  \country{China}}
  \email{yuhuawang@buaa.edu.cn}

\author{Xiaodong Li}
\affiliation{%
  \institution{Center for the Applied Statistics,
  School of Statistics,
  Renmin University of China}
  \city{Beijing}
  \country{China}}

\author{Yihao Guo}
\affiliation{%
  \institution{School of Computer Science \& Technology,  
  Beijing Jiaotong University}
  \city{Beijing}
  \country{China}}

\author{Yuxiang Jia}
\affiliation{%
  \institution{School of Software, 
  Beihang University}
  \city{Beijing}
  \country{China}}

\author{Qinnan Zhang}
\authornote{Corresponding authors. This work was supported by Beijing Advanced Innovation Center for Future Blockchain and Privacy Computing.}
\affiliation{%
  \institution{School of Artificial Intelligence,\\
  Beihang University}
  \city{Beijing}
  \country{China}}

\author{Yifan Sun}
\authornotemark[1]
\affiliation{%
  \institution{Center for the Applied
Statistics, School of Statistics, Renmin University of China}
  \city{Beijing}
  \country{China}}

\author{Hainan Zhang}
\affiliation{%
  \institution{School of Artificial Intelligence,\\
  Beihang University}
  \city{Beijing}
  \country{China}}

\author{Yongxin Tong}
\affiliation{%
  \institution{School of Computer Science and Engineering, Beihang University}
  \city{Beijing}
  \country{China}}

\author{Zhiming Zheng}
\affiliation{%
  \institution{School of Artificial Intelligence, Beihang University}
  \city{Beijing}
  \country{China}}

\renewcommand{\shortauthors}{Wang et al.}

\begin{abstract}
Federated prompt tuning (FPT) enables collaborative adaptation of vision--language models (VLMs) using lightweight prompts. Existing methods often address heterogeneity and privacy through a split-prompt design under local differential privacy (DP), combining a shared prompt for global transfer with private prompts for local adaptation. However, a single shared prompt may over-smooth diverse transferable knowledge, weakening the balance between personalization and generalization. Multi-expert prompts (MEPs) can better capture this diversity, but enlarge the communicated space, increasing DP noise and communication cost while making robust expert composition more difficult.
We propose \textbf{FedSEPT}, a privacy-preserving \textbf{Fed}erated \textbf{S}ubspace-decomposed \textbf{E}xpert \textbf{P}rompt \textbf{T}uning. 
Specifically, we employ Subspace-decomposed Expert Modeling (SEM) to parameterize multiple prompt experts with shared low-rank factors, a fixed public basis, and private residuals, thereby confining communication and DP perturbation to a compact factor space while enabling direct server aggregation in a common coordinate system.
We further design Instance-aware Expert Fusion (IEF), which adaptively combines semantically complementary experts via on-device routing and performs efficient logit-level fusion using cached expert-specific text features. 
Extensive experiments on 11 heterogeneous benchmarks show that, under the same privacy constraints, FedSEPT achieves a better trade-off between local adaptation and global generalization than strong baselines.
\end{abstract}

\begin{CCSXML}
<ccs2012>
   <concept>
       <concept_id>10010147.10010919</concept_id>
       <concept_desc>Computing methodologies~Distributed computing methodologies</concept_desc>
       <concept_significance>500</concept_significance>
       </concept>
   <concept>
       <concept_id>10002978.10003029.10011150</concept_id>
       <concept_desc>Security and privacy~Privacy protections</concept_desc>
       <concept_significance>500</concept_significance>
       </concept>
 </ccs2012>
\end{CCSXML}

\ccsdesc[500]{Computing methodologies~Distributed computing methodologies}
\ccsdesc[500]{Security and privacy~Privacy protections}

\keywords{Federated Learning, Prompt Tuning, Differential Privacy, Data Heterogeneity}


\maketitle

\begin{figure}[t]
    \centering
    \includegraphics[width=0.95\linewidth]{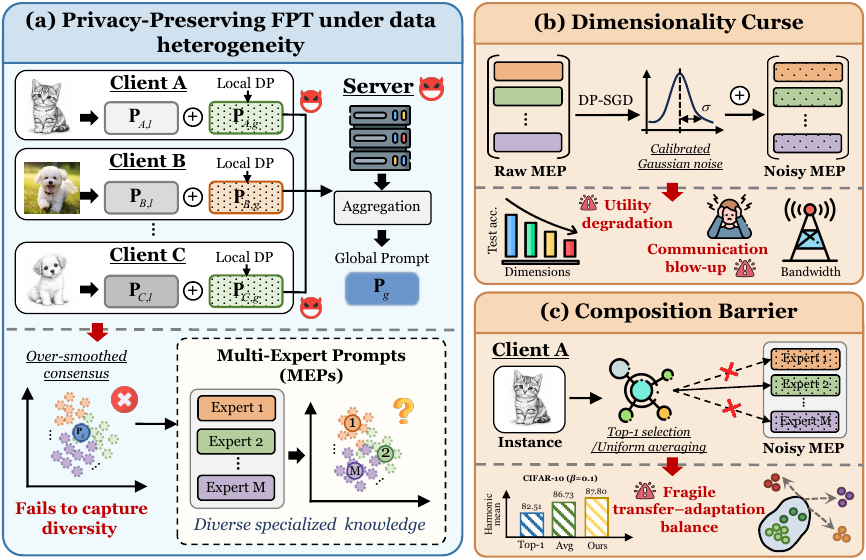}
    \vspace{-2mm} 
    \caption{\textbf{Motivation}. 
    (a) Privacy-Preserving FPT suffers from over-smoothed transferable knowledge under data heterogeneity. 
    Transitioning to MEPs raises: 
    (b) the \textbf{Dimensionality Curse}, where DP noise and communication costs scale with the number of expert prompts; and 
    (c) the \textbf{Composition Barrier}, where noisy expert prompts must be effectively composed for better transfer–adaptation balance.}
    \label{fig:motivation}
\end{figure} 

\section{Introduction}
\label{sec:intro} 
Prompt tuning has been established as a standard paradigm for adapting large vision–language models (VLMs) by optimizing only lightweight context vectors while freezing the backbone~\cite{radford2021clip,zhou2022learning,zhou2022conditional,yao2023visual}.
When combined with federated learning (FL)~\cite{mcmahan2017fedavg}, it gives rise to federated prompt tuning (FPT), which enables multiple clients to collaboratively adapt a shared foundation model without centralizing raw local data~\cite{zhao2023fedprompt,guo2023promptfl}. 
By communicating only prompt-related updates, FPT offers a parameter-efficient path for deploying VLMs in distributed environments such as mobile and edge systems.  
Despite its promise, practical FPT must confront two fundamental problems in real-world FL systems. 
On the one hand, client data often exhibit statistical heterogeneity, with substantial variation in label distributions, visual domains, and semantic concepts~\cite{kairouz2021advances,wang2026taming}. Such non-IID characteristics render a monolithic global adaptation suboptimal for individual local contexts.
On the other hand, prompt updates can leak sensitive information, and an honest-but-curious or untrusted server may infer private client attributes through model updates via attacks such as membership inference~\cite{nasr2019comprehensive,nguyen2026leveraging} and gradient inversion~\cite{geiping2020inverting,Sami_2025_CVPR}.
Hence, \emph{privacy-preserving FPT under data heterogeneity} emerges as a critical desideratum: the learned prompts should not only adapt to diverse local distributions, but also preserve transferable knowledge across clients, while the shared updates remain rigorously protected by formal privacy guarantees.

To address heterogeneity, existing FPT methods commonly adopt a personalized split-prompt paradigm~\cite{yang2023pFedPG,guo2023pfedprompt,cui2024fedpgp,li2024fedotp,fang2025fedpha,zheng2025feddeap}, decomposing prompts into a shared global and a private local component. 
The shared part captures transferable knowledge across clients, whereas the private part remains on-device for local specialization. 
To protect the shared updates, local differential privacy (DP)~\cite{dwork2006differential} is then enforced by perturbing transmitted updates before communication, often via differentially private stochastic gradient descent (DP-SGD)~\cite{abadi2016deep}, which clips per-example gradients and injects calibrated Gaussian noise~\cite{wei2021user,lee2025fedsvd}. 
While this paradigm provides a practical foundation for secure and heterogeneous collaboration, most existing frameworks still rely on a monolithic shared prompt to encode all transferable knowledge.
This design becomes restrictive under severe heterogeneity, where clients often cluster around diverse latent semantics or visual regularities~\cite{ye2023,DiPrompT2024}.
Compressing such transferable cues into a single prompt yields an over-smoothed consensus, which blurs semantically distinct patterns and weakens the balance between local specialization and global generalization.

A natural remedy is therefore to move from a single shared prompt to multi-expert prompts (MEPs), where multiple shared prompt experts explicitly capture different modes of transferable knowledge.
However, realizing an effective privacy-preserving FPT with MEPs immediately introduces two key challenges, as illustrated in Fig.~\ref{fig:motivation}.
\blackcircnum{1} \textbf{Privacy-Efficiency Dimensionality Curse}.
From the perspective of private optimization, introducing multiple shared experts substantially enlarges the dimensionality of communicated updates. 
Under local DP, this becomes particularly problematic because the expected squared norm of injected DP noise scales with the dimensionality of the privatized parameter space~\cite{zhou2021bypassing,pmlr-v235-wang24cu}. 
As a result, naively applying DP-SGD to high-dimensional MEPs can severely distort the delicate semantic geometry inherited from pretrained VLMs. Meanwhile, the linear growth of communication overhead with the number of expert prompts becomes prohibitive for bandwidth-limited edge devices.
\blackcircnum{2} \textbf{Robust Expert Composition Barrier}. 
From the perspective of expert utilization, the value of MEPs lies not in parameter expansion, but in whether multiple expert prompts can be seamlessly composed into an effective representation.
This becomes more pronounced under local DP, where each shared expert prompt is perturbed independently, making conventional routing strategies less effective. Sparse top-1 selection~\cite{fedus2022switch} can make the model brittle by over-relying on a single noisy expert prompt, whereas naive uniform averaging~\cite{huang2023experts} weakens instance-level adaptation by ignoring input differences. 
Therefore, achieving a better balance between robust cross-client transfer and instance-specific adaptation requires a more effective expert fusion mechanism that can mitigate the impact of any single noisy expert while preserving input-dependent expert specialization. 
These limitations lead to a pivotal question:
\textit{How can we equip privacy-preserving FPT with MEPs that achieve a stronger personalization--generalization trade-off, while remaining privacy-efficient under local DP?}

To this end, we propose \textbf{FedSEPT}, a novel privacy-preserving \textbf{Fed}erated \textbf{S}ubspace-decomposed \textbf{E}xpert \textbf{P}rompt \textbf{T}uning.
Motivated by prior evidence that adaptation updates of pretrained models often exhibit low-dimensional or low-rank structure~\cite{aghajanyan2021intrinsic,hu2022lora}, the key idea of FedSEPT is to enhance the expressiveness of shared knowledge while restricting private federated collaboration to a compact and stable space. 
\textbf{First}, to address \blackcircnum{1}, we introduce \textbf{S}ubspace-decomposed \textbf{E}xpert \textbf{M}odeling \textbf{(SEM)}, which represents each prompt expert using a shared low-rank factor, a fixed public basis, and a client-private residual. 
DP-SGD is applied exclusively to the compact shared factors, ensuring local DP while avoiding high-dimensional perturbation. 
Only these low-rank factors are communicated and aggregated, substantially reducing communication cost, while the fixed public basis provides a consistent parameterization across clients.
\textbf{Second}, to tackle \blackcircnum{2}, we develop \textbf{I}nstance-aware \textbf{E}xpert \textbf{F}usion \textbf{(IEF)}, where an on-device router combines semantically complementary experts into an input-dependent representation. 
This design improves robustness against noisy experts while retaining instance-wise adaptation, thereby circumventing the pitfalls of both hard top-1 selection and uniform averaging.
To make fusion efficient, we cache per-expert text features and perform fusion at the logit layer, avoiding repeated text encoding from dynamically mixed prompts.
Our main contributions are:
\begin{itemize}[leftmargin=1.5em, itemsep=0.2em, topsep=0.2em]
\item We propose \textbf{FedSEPT}, a federated MEP tuning framework that explicitly disentangles structured diversity under local DP.
\item We introduce \emph{Subspace-decomposed Expert Modeling} (SEM), leveraging low-rank synchronization and fixed public basis to ensure privacy-efficient sharing and consistent global aggregation.
\item  We design \emph{Instance-aware Expert Fusion} (IEF), enabling robust and efficient composition of semantically complementary noisy experts through on-device routing and logit-level fusion.
\item Extensive experiments on 11 benchmarks demonstrate that, under the same privacy constraints, FedSEPT achieves a superior trade-off between local adaptation and global generalization. 
Code is available at \url{https://github.com/yuCoryx/FedSEPT}.
\end{itemize}

\section{Related Work}
\label{sec:relatedwork}
\noindent \textbf{Personalized Federated Prompt Learning.\ }
Federated prompt learning (FPL) achieves parameter efficiency by collaboratively optimizing lightweight prompts over frozen backbones.
While early methods such as FedPrompt~\cite{zhao2023fedprompt} and PromptFL~\cite{guo2023promptfl} validated this approach, their reliance on a global prompt limits adaptability under heterogeneity.
To reconcile global sharing with local specialization, personalized FPL methods largely adopt decomposition-based designs. 
pFedPrompt~\cite{guo2023pfedprompt} optimizes local visual attention while maintaining shared language prompts. pFedPG~\cite{yang2023pFedPG} introduces a prompt generation framework, utilizing a server-side generator for personalized prompts. FedPGP~\cite{cui2024fedpgp} further models prompts as a superposition of shared and low-rank personalized components, FedOTP~\cite{li2024fedotp} introduces feature-level alignment via optimal transport, and FedPHA~\cite{fang2025fedpha} tackles structural heterogeneity via fixed-length global anchors and variable-length local carriers. 
Despite adapting effectively, these methods adhere to a \emph{single-shared-prompt} paradigm, which tends to over-smooth heterogeneous transferable patterns and restricts the modeling of diverse shared knowledge.  
A recent exception, pFedMoAP~\cite{luo2025pFedMoAP}, adopts a mixture-of-experts (MoE)~\cite{shazeer2017moe} composition. 
However, its proximity-based retrieval confines expert selection to similar peers, prioritizing experience reuse within local neighborhoods over discovering semantically complementary sources.  
As a result, it does not fully exploit the structural potential of multi-expert modeling for integrating diverse global knowledge toward cross-client generalization.

\vspace{1mm}
\noindent \textbf{Privacy-Preserving Federated Prompt Learning.\ }
Although FPL exchanges only a small number of prompt parameters, it still exposes client information through communicated updates and remains vulnerable to a semi-honest or curious server.  
Existing privacy-preserving FPL methods mainly fall into two categories.  
The first utilizes cryptographic protocols, such as secure aggregation or encrypted collaboration, to prevent direct access to client updates. For example, SecFPP~\cite{hou2025SecFPP} strengthens privacy protection through secure aggregation, but such methods typically incur substantial system overhead, which limits their practicality in cross-device settings. 
The second line adopts DP, where noise is injected into prompt-related updates to provide formal privacy guarantees. 
Representative efforts such as DP-FPL~\cite{tran2025dpfpl} offer better deployability, yet they struggle with the privacy-utility trade-off in high-dimensional prompt spaces and often overlook privacy risks during the training phase. 
More importantly, prior privacy-preserving FPL methods are largely tailored for single-prompt or weakly structured shared representations, and thus do not directly address how to preserve diverse transferable knowledge under local DP. 


\begin{figure*}[t]
    \centering
    \includegraphics[width=0.98\linewidth]{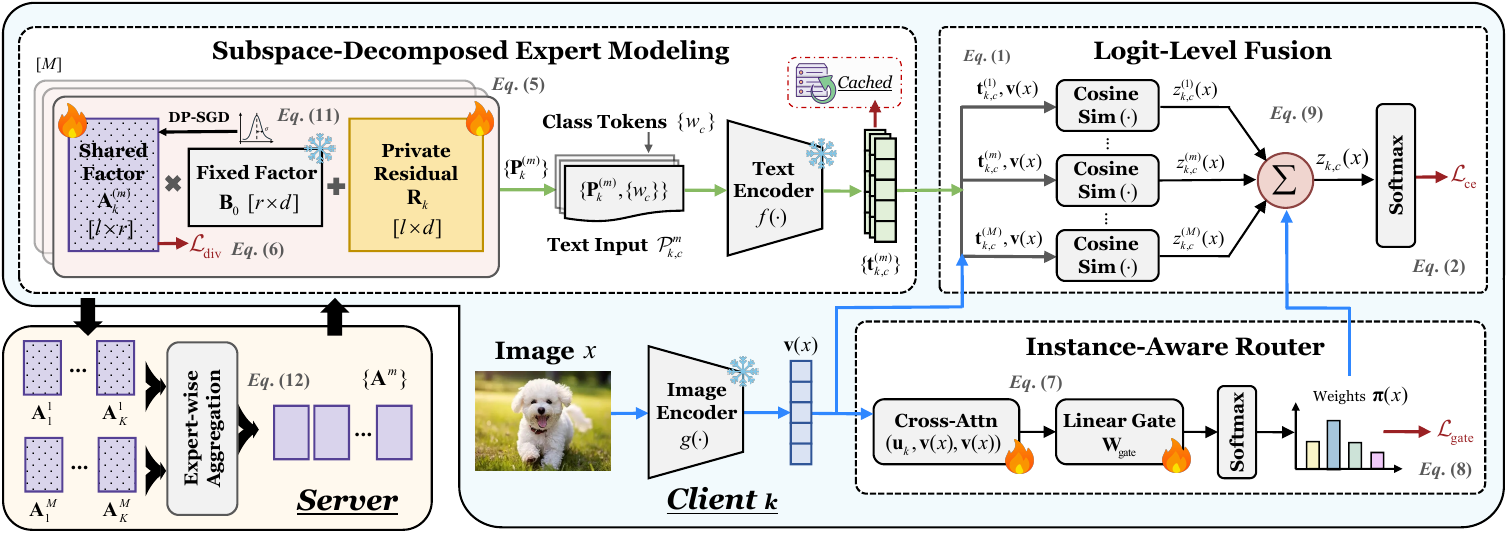}
    \vspace{-1mm}
    \caption{Overview of \textbf{FedSEPT}. 
    \ding{172} Each client constructs $M$ prompt experts as $\mathbf{P}_k^m=\mathbf{A}_k^m\mathbf{B}_0+\mathbf{R}_k$, where DP-SGD is applied only to the shared factors $\mathbf{A}_k^m$, the public basis $\mathbf{B}_0$ remains fixed, and the private residual $\mathbf{R}_k$ stays local.
    \ding{173} A local router predicts instance-aware expert weights from image features.
    \ding{174} Expert text features are cached and combined through logit-level fusion.
    \ding{175} Clients upload only privatized expert factors for server aggregation, while personalized parameters remain on-device.
    }
    \label{fig:overview}
\end{figure*}

\section{Preliminaries and Motivation}
\noindent \textbf{Federated Prompt Tuning (FPT).\ }
In an FL system with $K$ clients, each holding a private dataset $\mathcal{D}_k$, FPT adapts a frozen VLM—such as CLIP~\cite{radford2021clip}, comprising a text encoder $f(\cdot)$ and an image encoder $g(\cdot)$—to downstream tasks by optimizing only a learnable prompt $\mathbf{P}\in\mathbb{R}^{L\times d}$~\cite{guo2023promptfl}. Let $\mathcal{P}_c=\{\mathbf{P}, w_c\}$ be the prompted text input for class $c$, where $w_c$ is the class token. For an image $x$, the prediction probability for class $c$ is given by:
\begin{equation}
\label{eq:fpt_prob}
q(\hat{y}=c\mid x)=
\frac{\exp\big(\mathrm{sim}(g(x), f(\mathcal{P}_c))/\tau\big)}
{\sum_{j=1}^{C}\exp\big(\mathrm{sim}(g(x), f(\mathcal{P}_j))/\tau\big)},
\end{equation}
where $\mathrm{sim}(\cdot,\cdot)$ denotes cosine similarity and $\tau$ is the temperature.
Each client minimizes the local cross-entropy loss:
\begin{equation}
\label{eq:fpt_local_obj}
\min\ \mathbb{E}_{(x,y)\sim\mathcal{D}_k}\big[\mathcal{L}_{\mathrm{ce}}(x,y)\big],
\;
\mathcal{L}_{\mathrm{ce}}(x,y)=-\log q(\hat{y}=y\mid x).
\end{equation}
At each communication round $t$, clients upload their prompts $\mathbf{P}_k^{(t)}$, which the server aggregates via weighted averaging and broadcasts as the updated global prompt $\mathbf{P}^{(t+1)}$ for the next round.

\vspace{1mm}
\noindent \textbf{Differential Privacy (DP).\ }  
DP~\cite{dwork2006differential} ensures that a randomized mechanism's output is insensitive to any single training sample.

\begin{definition}[$(\epsilon,\delta)$-DP~\cite{dwork2006differential}]
A randomized mechanism $\mathcal{M}: \mathcal{X} \rightarrow \mathcal{R}$ satisfies $(\epsilon,\delta)$-DP if for any two neighboring datasets $\mathcal{D},\mathcal{D}'$ differing by one sample, and for any measurable subset ${S}\subseteq \mathcal{R}$,
\(\Pr[\mathcal{M}(\mathcal{D}) \in {S}]
\leq
e^{\epsilon}\Pr[\mathcal{M}(\mathcal{D}') \in {S}] + \delta,\)
where $\epsilon$ bounds the privacy loss and $\delta\in[0,1)$ allows a small failure probability.
\end{definition} 

In this work, we adopt the local DP setting to defend against an honest-but-curious or untrusted server. Unlike centralized DP, local DP requires each client to privatize its communicated updates before transmission~\cite{wei2021user,lee2025fedsvd}, so that the server never observes raw client gradients or unprotected prompt updates.

\vspace{1mm}
\noindent \textbf{Split-Prompt FPT under Local DP.\ } 
To cope with data heterogeneity while enforcing local DP, existing FPT methods frequently employ a personalized split-prompt architecture, which decomposes the prompt into a shared and a client-specific component~\cite{cui2024fedpgp,li2024fedotp,fang2025fedpha}:
\begin{equation}
\label{eq:split_prompt}
\mathbf{P}=\mathbf{P}_g+\mathbf{P}_l
\quad\text{or}\quad
\mathbf{P}=[\mathbf{P}_g:\mathbf{P}_l],
\end{equation}
where $\mathbf{P}_g$ captures transferable knowledge across clients, while $\mathbf{P}_l$ remains on-device for local adaptation. 
Under this paradigm, client-side DP-SGD~\cite{abadi2016deep,lee2025fedsvd} is applied strictly to the shared component $\mathbf{P}_g$.
Given a minibatch $\mathcal{B}$, the per-example gradients with respect to $\mathbf{P}_g$ are clipped and perturbed with Gaussian noise:
\begin{equation}
\label{eq:pfpt_dpsgd}
\tilde{\mathbf{g}}_{\mathcal{B}}
=
\frac{1}{|\mathcal{B}|}
\left[
\sum\nolimits_{(x_i,y_i)\in\mathcal{B}}
\textsc{Clip}\!\left(
\nabla_{\mathbf{P}_g}\ell(x_i,y_i),\,R
\right)
+
\mathcal{N}\!\left(
\mathbf{0},\,\sigma^2R^2\mathbf{I}
\right)
\right].
\end{equation}
where $\textsc{Clip}(\mathbf{g}_i, R) = \mathbf{g}_i \cdot \min(1, R/\|\mathbf{g}_i\|_2)$ is the clipping operator with threshold $R$, $\sigma$ is the noise multiplier. 
The shared prompt $\mathbf{P}_g$ is then updated using the privatized gradient before upload, while the client-specific component $\mathbf{P}_l$ remains local.

\noindent \textbf{Motivation.\ } 
Existing split-prompt FPT methods still rely on a single shared prompt $\mathbf{P}_g$ to capture transferable knowledge across clients. 
Under severe heterogeneity, diverse transferable cues from different clients are difficult to encode in one static shared context, which can lead to an over-homogenized shared prompt and a weaker personalization--generalization trade-off.

Transitioning to multi-expert prompts (MEPs) provides a logical pathway forward, theoretically allowing distinct experts to specialize in diverse transferable patterns. 
Yet, extending privacy-preserving FPT with MEPs is non-trivial. 
\textbf{First}, introducing multiple experts enlarges the communicated parameter space, so DP perturbation is injected into a higher-dimensional update and the communication burden increases accordingly.
\textbf{Second}, effective expert utilization becomes harder under local DP, since each expert is perturbed separately: sparse selection is brittle to noisy experts, whereas uniform averaging weakens input-specific adaptation. 
These challenges call for a federated MEP-tuning framework that is both privacy-efficient and capable of robust, input-adaptive expert fusion. 
Notations are summarized in Appendix~\ref{app:notations}.

\section{Methodology} 
\label{sec:method}
We present \textbf{FedSEPT}, a \textbf{Fed}erated \textbf{S}ubspace-decomposed \textbf{E}xpert \textbf{P}rompt \textbf{T}uning based on \emph{Subspace-decomposed Expert Modeling} (SEM) and \emph{Instance-aware Expert Fusion} (IEF), together with a federated private optimization procedure.

\subsection{Subspace-Decomposed Expert Modeling}
\label{subsec:method_main}

We begin by parameterizing multiple prompt experts in a compact shared space, while retaining client-specific structures locally.

\subsubsection{Expert Parameterization}
\label{sec:Expert Parameterization}

Each client maintains a library of $M$ prompt experts. For client $k$, the $m$-th expert prompt is parameterized as
\begin{equation}
\label{eq:expert_prompt}
\mathbf{P}_{k}^{m}
=
\mathbf{A}_{k}^{m}\mathbf{B}_0 + \mathbf{R}_k,
\qquad m=1,\ldots,M,
\end{equation}
where $\mathbf{A}_{k}^{m}\in\mathbb{R}^{L\times r}$ is the low-dimensional expert factor, $\mathbf{B}_0\in\mathbb{R}^{r\times d}$ is a public data-independent basis, and $\mathbf{R}_k\in\mathbb{R}^{L\times d}$ is a client-specific residual. Here, $L$ is the prompt length, $d$ is the embedding dimension, and $r\ll d$ is the subspace dimension. The component $\mathbf{A}_{k}^{m}\mathbf{B}_0$ captures transferable knowledge, whereas $\mathbf{R}_k$ models client-specific information and remains local. Therefore, only $\{\mathbf{A}_{k}^{m}\}_{m=1}^{M}$ are communicated and aggregated, separating global knowledge sharing from local personalization.

The basis $\mathbf{B}_0$ is initialized independently of client data, distributed to all clients, and fixed throughout training. This shared basis places expert factors in the same coordinate system, which is necessary because low-rank factorizations are not unique: for any orthogonal matrix $\mathbf{Q}\in\mathbb{R}^{r\times r}$,
\(\mathbf{A}\mathbf{B}
=
(\mathbf{A}\mathbf{Q}^{\top})(\mathbf{Q}\mathbf{B}).\) 
Learning client-specific bases would produce factor representations in incompatible coordinate systems, rendering direct averaging unreliable. By fixing a shared public basis, SEM aligns all expert factors within a common coordinate system and enables straightforward server aggregation. Meanwhile, the low-rank parameterization reduces the number of communicated parameters from $MLd$ to $MLr$.

\subsubsection{Expert Diversity Regularization}
\label{sec:expert_diversity}

Although the experts share the same basis, they should capture complementary patterns rather than redundant representations.
We stack the vectorized expert factors as $\mathbf{H}_k=[\operatorname{vec}(\mathbf{A}_k^1),\ldots,\operatorname{vec}(\mathbf{A}_k^M)]^\top\in\mathbb{R}^{M\times Lr}$, normalize its rows, and penalize the off-diagonal entries of the Gram matrix:
\begin{equation}
\label{eq:div}
\mathcal{L}_{\mathrm{div}}
=
\left\|
\operatorname{Norm}(\mathbf{H}_k)
\operatorname{Norm}(\mathbf{H}_k)^{\top}
\odot
\left(
\mathbf{1}_{M\times M}-\mathbf{I}_M
\right)
\right\|_F^2,
\end{equation}
where $\operatorname{Norm}(\cdot)$ denotes row-wise $\ell_2$ normalization, and $\mathbf{I}_M$ is the identity matrix. Minimizing $\mathcal{L}_{\mathrm{div}}$ discourages expert collapse and promotes specialization in complementary transferable knowledge.

\subsection{Instance-Aware Expert Fusion}
With a diverse set of experts established, 
we next design an input-dependent mechanism to compose them for each sample.

\subsubsection{Instance-Aware Routing}
\label{sec:Instance-Aware Routing}
We employ a lightweight client-specific router to predict expert weights from visual features. Given an image $x$ with normalized visual feature $\mathbf{v}(x)$, client $k$ extracts a task-specific representation using a learnable probe query $\mathbf{u}_k$ and computes the expert weights through a linear gate:
\begin{equation}
\label{eq:router}
\mathbf{h}_k
=
\operatorname{MultiHeadAttn}(\mathbf{u}_k,\mathbf{v}(x),\mathbf{v}(x)),
\,
\boldsymbol{\pi}_k(x)
=
\operatorname{Softmax}
(
\mathbf{W}_{\mathrm{gate},k}\mathbf{h}_k
).
\end{equation}
All routing parameters ($\mathbf{u}_k$ and $\mathbf{W}_{\mathrm{gate},k}$) are personalized on-device. 

To prevent a winner-takes-all collapse where a few dominant experts monopolize the assignments, we regularize the mean routing weights over a minibatch $\mathcal{B}$ using a load-balancing loss:
\begin{equation}
\label{eq:gate}
\bar{\boldsymbol{\pi}}_k
=
\frac{1}{|\mathcal{B}|}
\sum\nolimits_{x_i\in\mathcal{B}}
\boldsymbol{\pi}_k(x_i),
\,
\mathcal{L}_{\mathrm{gate}}
=
\sum\nolimits_{m=1}^{M}
\bar{\pi}_{k,m}
\log\left(M\bar{\pi}_{k,m}\right).
\end{equation}

\subsubsection{Logit-level Fusion} 
\label{sec:Logit-level Fusion}
Instance-aware composition could be performed at the prompt level by
$\mathbf{P}_k(x)=\big(\sum_{m=1}^{M}\pi_{k,m}(x)\mathbf{A}_k^m\big)\mathbf{B}_0+\mathbf{R}_k$.
However, such a design renders text features input-dependent, forcing the text encoder to re-evaluate every image-class pair and thereby negating the caching benefits of prompt tuning. We overcome this by shifting fusion to the logit level.

For each expert prompt $\mathbf{P}_{k}^{m}$ and class $c$, we compute the normalized text feature 
$\mathbf{t}_{k,c}^{m}=f(\mathcal{P}_{k,c}^{m})/\|f(\mathcal{P}_{k,c}^{m})\|_2,$
where $\mathcal{P}_{k,c}^{m}$ denotes the prompted class text using $\mathbf{P}_{k}^{m}$.
Crucially, these per-expert text features are computed once after each prompt update and cached locally for reuse across samples.
The per-expert class logits are obtained via standard cosine similarity, $z_{k,c}^{m}(x) = {\langle \mathbf{v}(x), \mathbf{t}_{k,c}^{m}\rangle}/{\tau}$. 
Finally, we fuse these independent expert predictions into a single logit using the predicted sample-wise routing weights:
\begin{equation}
\label{eq:logit_mix}
z_{k,c}(x)
{=}
\sum_{m=1}^{M}
\pi_{k,m}(x)z_{k,c}^{m}(x),
\,
p_k(\hat{y}{=}c\!\mid\! x)
=
\frac{\exp(z_{k,c}(x))}
{\sum_{j=1}^{C}\exp(z_{k,j}(x))}.
\end{equation} 
This late-fusion strategy replaces repeated text-encoder passes, which scale with $|\mathcal{B}|\cdot C$, by inexpensive vector dot-products and lightweight routing, making dynamic expert composition practical in both training and inference.

\subsection{Federated Private Optimization}
\label{subsec:private_optimization} 
During local training, client $k$ jointly optimizes the classification objective and the two structural regularizers:
\begin{equation}
\label{eq:total_loss}
\mathcal{L}_{\mathrm{total}}
=
\mathcal{L}_{\mathrm{ce}}
+
\lambda_{\mathrm{div}}\mathcal{L}_{\mathrm{div}}
+
\lambda_{\mathrm{gate}}\mathcal{L}_{\mathrm{gate}}.
\end{equation}

To enforce local DP, we apply DP-SGD only to the shared factors $\mathbf{A}_k=\{\mathbf{A}_k^m\}_{m=1}^{M}$. For a minibatch $\mathcal{B}$, they are updated as
\begin{equation}
\label{eq:subspace_dpsgd}
\mathbf{A}_k
=
\mathbf{A}_k
-
\frac{\eta}{|\mathcal{B}|}
\left[
\sum_{(x,y)\in\mathcal{B}}
\textsc{Clip}
\left(
\nabla_{\mathbf{A}_k}\ell(x,y),R
\right)
+
\mathcal{N}
\left(
0,\sigma_k^2R^2\mathbf{I}
\right)
\right].
\end{equation}
where $R$ is the clipping norm, $\sigma_k$ is the noise multiplier for client $k$.

After local optimization, each client $k$ uploads its privatized shared factors $\mathbf{A}_k$ to the server. 
Since all clients use the same fixed public basis, the server performs expert-wise weighted aggregation directly:
\begin{equation}
\label{eq:fedsep_agg}
\mathbf{A}^{m,(t+1)}
=
\sum_{k}
\frac{n_k}{\sum_{j} n_j}
\mathbf{A}_{k}^{m,(t)},
\quad m=1,\dots,M,
\end{equation}
where $\mathbf{A}^{m,(t+1)}$ denotes the updated global shared factor associated with the $m$-th expert at round $t+1$.
The aggregated set $\mathbf{A}^{(t+1)}=\{\mathbf{A}^{m,(t+1)}\}_{m=1}^{M}$ is then broadcast to participating clients for the next round.
The overall training procedure is summarized in Alg.~\ref{alg:fedsep}.

\vspace{1mm}
\noindent \textbf{Privacy Guarantee of FedSEPT.\ }
We now establish the privacy guarantee under the local DP threat model. 

\begin{theorem}[Privacy Guarantee]
\label{thm:privacy}
For client $k$, let $q_k=|\mathcal{B}|/n_k$ denote the minibatch sampling rate, and let $S_k$ be the total number of local DP-SGD steps performed over all communication rounds. For any $\delta\in(0,1)$, there exists a constant $c>0$ such that, if the noise multiplier satisfies \(\sigma_k \ge c \cdot
{q_k\sqrt{S_k\ln(1/\delta)}}/{\epsilon},\)
then the sequence of updates released by client $k$ satisfies $(\epsilon,\delta)$-DP.
\end{theorem}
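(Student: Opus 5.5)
The plan is to reduce Theorem~\ref{thm:privacy} to the moments accountant of Abadi et al.~\cite{abadi2016deep}, after isolating exactly which quantities are data-dependent and released.

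\textbf{Step 1: the released view is a post-processing of the noisy gradients.} Fix client $k$. The only data-dependent quantities that leave the client are the factor sets $\mathbf{A}_k$ uploaded at the end of each round. These are deterministic functions of three things: the broadcast global factors, the public basis $\mathbf{B}_0$, and the sequence of privatized minibatch gradients in Eq.~\eqref{eq:subspace_dpsgd}.

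One subtlety is that the gradient $\nabla_{\mathbf{A}_k}\ell$ also depends on the private residual $\mathbf{R}_k$ and the router parameters. These are updated non-privately. I will treat them as part of the internal state of an adaptive composition: each step is a mechanism whose query may depend arbitrarily on the previous state. The only requirement is that, conditioned on that state, each per-example contribution has $\ell_2$-norm at most $R$.

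Because the $\mathbf{A}_k$-update uses only clipped per-example gradients plus $\mathcal{N}(0,\sigma_k^2R^2\mathbf{I})$, each step is a subsampled Gaussian mechanism. Under add/remove-one neighboring datasets its sensitivity is $R$, so its effective noise multiplier is $\sigma_k$. The regularizer $\mathcal{L}_{\mathrm{div}}$ depends only on $\mathbf{A}_k$, not on data, so it does not affect sensitivity. $\mathcal{L}_{\mathrm{gate}}$ is a batch-level term; I will assume, as in the paper's Eq.~\eqref{eq:subspace_dpsgd}, that the clipped quantity is the per-example loss gradient, and I will note this assumption.

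\textbf{Step 2: account for the privacy of one step.} With Poisson (or uniform) sampling at rate $q_k$, I will invoke the moments-accountant bound of Abadi et al., Theorem 1. For $\sigma_k\ge 1$, $q_k\le 1/(16\sigma_k)$ and $\lambda\le\sigma_k^2\ln(1/(q_k\sigma_k))$, the log-moment of one step satisfies $\alpha(\lambda)\le q_k^2\lambda(\lambda+1)/((1-q_k)\sigma_k^2)+O(q_k^3\lambda^3/\sigma_k^3)$, which is $O(q_k^2\lambda^2/\sigma_k^2)$.

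\textbf{Step 3: compose over all steps.} Log-moments add under adaptive composition, so the total over $S_k$ steps is at most $c_1 S_k q_k^2\lambda^2/\sigma_k^2$. The tail bound gives $\delta=\min_\lambda\exp(\alpha_{\text{tot}}(\lambda)-\lambda\epsilon)$. Choosing $\lambda\approx \epsilon\sigma_k^2/(2c_1 S_k q_k^2)$ yields $\delta\le\exp(-\epsilon^2\sigma_k^2/(4c_1 S_k q_k^2))$. This is at most $\delta$ exactly when $\sigma_k\ge c\,q_k\sqrt{S_k\ln(1/\delta)}/\epsilon$.

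\textbf{Step 4: extend to all releases.} Post-processing carries the guarantee from the gradient sequence to all uploaded factors across rounds. The server aggregation and the broadcast factors are computed from other clients' privatized outputs, so they do not consume client $k$'s budget.

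\textbf{Main obstacle: the side conditions.} The hardest part is making the constant $c$ honest. The accountant bound requires $\sigma_k\ge1$, $q_k\le 1/(16\sigma_k)$, and $\epsilon\le c_2 q_k^2S_k$, and the chosen $\lambda$ must lie in the admissible range. I will either state these regime conditions explicitly, as Abadi et al.\ do, or absorb them by enlarging $c$. For the second route, the fallback is the RDP bound for the subsampled Gaussian (Mironov et al.) plus the standard RDP-to-DP conversion. That bound gives the same $q_k\sqrt{S_k\ln(1/\delta)}/\epsilon$ scaling with fewer restrictions, though it still needs small $q_k$.
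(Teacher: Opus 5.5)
\textbf{The gap: non-privately trained state.} Your route is the paper's: each step is a subsampled Gaussian mechanism, the moments accountant composes them over $S_k$ steps, and post-processing covers the uploads. Your Step~3 optimization over $\lambda$ is correct. The step that fails is in Step~1, where you handle the non-privately trained state $\boldsymbol{\theta}_k^{\mathrm{pri}}=\{\mathbf{R}_k,\mathbf{u}_k,\mathbf{W}_{\mathrm{gate},k}\}$ by ``adaptive composition, conditioned on the state''.

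Adaptive composition lets the query at step $s$ depend on previously \emph{released} (noised) outputs and on data-independent randomness. It does not allow dependence on a hidden state computed from the dataset without noise. In FedSEPT, $\boldsymbol{\theta}_k^{\mathrm{pri}}$ is updated with unclipped, un-noised gradients on the same minibatches. It is never reset between rounds. It enters $\nabla_{\mathbf{A}_k}\ell(x,y)$ in two ways: through the text features $\mathbf{t}^m_{k,c}$, which depend on $\mathbf{R}_k$, and through the weights $\pi_{k,m}(x)$.

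Take neighbors $\mathcal{D}_k,\mathcal{D}_k'$ differing in $x^\star$. After the first step at which $x^\star$ is sampled, the two runs carry different $\boldsymbol{\theta}_k^{\mathrm{pri}}$. So at \emph{every} later step they apply different query functions to \emph{every} example in the batch. The only a priori bound on the distance between the two clipped sums is then $2R|\mathcal{B}|$, at every subsequent step. It is not $R$ on the probability-$q_k$ event that $x^\star$ is sampled.

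The statement ``conditioned on that state, each contribution has norm at most $R$'' is true but beside the point, because the two neighboring runs are not in the same state. Hence the hypothesis of Abadi et al.'s per-step bound fails: that bound needs one fixed norm-bounded $f$ applied to neighboring datasets. The stated $\sigma_k$ therefore does not follow. Your worry about $\mathcal{L}_{\mathrm{gate}}$ is misplaced for the same reason: it has zero gradient in $\mathbf{A}_k$, and its only effect is through the router state, which is exactly this channel. The paper's own proof makes the same unstated assumption when it asserts that each step is a subsampled Gaussian mechanism. So following it would not repair this.

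\textbf{Possible repairs.} Everything that influences the released gradients must itself be a post-processing of noised quantities. Three options:
\begin{itemize}
\item Run DP-SGD on the concatenation $(\mathbf{A}_k,\boldsymbol{\theta}_k^{\mathrm{pri}})$, with joint per-example clipping and noise on all coordinates. The entire local trajectory is then post-processing, and your Steps~2--4 go through verbatim. The price is noise in the $Ld$-dimensional residual.
\item Keep $\boldsymbol{\theta}_k^{\mathrm{pri}}$ independent of the protected data during the rounds whose uploads are released, e.g.\ frozen or fit on separate data.
\item Weaken the theorem to carry this independence as an explicit assumption.
\end{itemize}

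\textbf{Side conditions.} Of your two options, only the first works: the conditions cannot be absorbed into $c$. With $q_k=S_k=1$ the mechanism is a single Gaussian mechanism. For large $\epsilon$ it needs $\sigma_k$ of order at least $1/\sqrt{\epsilon}$, whereas the threshold $c\sqrt{\ln(1/\delta)}/\epsilon$ eventually falls below that. So the statement must carry Abadi et al.'s conditions: $\epsilon<c_1q_k^2S_k$, $\sigma_k\ge 1$, $q_k<1/(16\sigma_k)$, and Poisson sampling. With these it is exactly their Theorem~1; the per-step moment bound you quote is their Lemma~3, not Theorem~1. Also, the epoch-wise pass over minibatches in Alg.~\ref{alg:fedsep} is not Poisson sampling. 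The accountant applies only if batches are actually Poisson-sampled, as Opacus does by default.
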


\begin{proof}
At each local step, FedSEPT clips per-sample gradients of the shared expert factors and adds Gaussian noise before updating the released parameters. Each step is thus a subsampled Gaussian mechanism with sampling rate $q_k$ and noise multiplier $\sigma_k$. By the moments-accountant analysis of DP-SGD~\cite{abadi2016deep}, composing privacy loss over $S_k$ steps yields the stated $(\epsilon,\delta)$-DP guarantee. Subsequent server aggregation based only on privatized factors incurs no additional privacy loss by DP post-processing~\cite{dwork2006differential}.
\end{proof}

\vspace{-2mm}

\begin{algorithm}[t]
\caption{FedSEPT}
\label{alg:fedsep}
\small
\begin{algorithmic}[1]
\REQUIRE Communication rounds $T$, local epochs $E$, number of experts $M$
\STATE Initialize the global expert factors $\mathbf{A}^{(0)}$ and distribute the public basis $\mathbf{B}_0$
\STATE Initialize the local private parameters
$\boldsymbol{\theta}_k^{\mathrm{pri}}
=\{\mathbf{R}_k,\mathbf{u}_k,\mathbf{W}_{\mathrm{gate},k}\}$
\FOR{$t=0,\ldots,T-1$}
    \STATE Broadcast $\mathbf{A}^{(t)}$ to all clients
    \FOR{each client $k=1,\ldots,K$ \textbf{in parallel}}
        \STATE Set $\mathbf{A}_k \leftarrow \mathbf{A}^{(t)}$
        \FOR{local epoch $e=1,\ldots,E$}
            \FOR{each minibatch $\mathcal{B}\subset\mathcal{D}_k$}
                \STATE Construct the expert prompts using Eq.~\eqref{eq:expert_prompt}
                \STATE Compute the instance-aware routing weights using Eq.~\eqref{eq:router}
                \STATE Compute the fused predictions using Eq.~\eqref{eq:logit_mix}
                \STATE Compute the total loss $\mathcal{L}_{\mathrm{total}}$ using Eq.~\eqref{eq:total_loss}
                \STATE Update $\boldsymbol{\theta}_k^{\mathrm{pri}}$ locally and update $\mathbf{A}_k$ using DP-SGD by Eq.~\eqref{eq:subspace_dpsgd}
            \ENDFOR
        \ENDFOR
        \STATE Upload the privatized expert factors $\mathbf{A}_k$
    \ENDFOR
    \STATE Aggregate $\{\mathbf{A}_k\}_{k=1}^{K}$ using Eq.~\eqref{eq:fedsep_agg} to obtain $\mathbf{A}^{(t+1)}$
\ENDFOR
\STATE \textbf{Output:} $\mathbf{A}^{(T)}$ and
$\{\boldsymbol{\theta}_k^{\mathrm{pri}}\}_{k=1}^{K}$
\end{algorithmic}
\end{algorithm}

\section{Experiments}\label{sec:Experiments} 
 
\begin{table*}[t]
\centering
\caption{Average test accuracy (\%) under pathological label skew at $\epsilon=1$. The \textbf{best} and the \underline{second} are marked.}
\vspace{-2mm}
\label{tab:pathological_label_skew}
\fontsize{8}{8}\selectfont
\renewcommand{\arraystretch}{1.15}
\setlength{\tabcolsep}{0.8pt}

\begin{tabularx}{\linewidth}{l||*{3}{>{\centering\arraybackslash}X}|*{3}{>{\centering\arraybackslash}X}|*{3}{>{\centering\arraybackslash}X}|*{3}{>{\centering\arraybackslash}X}|*{3}{>{\centering\arraybackslash}X}}
\noalign{\hrule height 0.8pt}
\rowcolor{gray!25}
& \multicolumn{3}{c|}{\textbf{Food101}}
& \multicolumn{3}{c|}{\textbf{Caltech101}}
& \multicolumn{3}{c|}{\textbf{Flowers102}}
& \multicolumn{3}{c|}{\textbf{DTD}}
& \multicolumn{3}{c}{\textbf{OxfordPets}} \\
\cline{2-4}\cline{5-7}\cline{8-10}\cline{11-13}\cline{14-16}

\rowcolor{gray!25}
\multicolumn{1}{c||}{\multirow{-2}{*}{Methods}}
& In & Cross & HM
& In & Cross & HM
& In & Cross & HM
& In & Cross & HM
& In & Cross & HM \\
\hline\hline

PromptFL~\cite{guo2023promptfl}
& 73.81 & \underline{73.86} & 73.83
& 81.91 & 81.42 & 81.66
& 16.74 & 17.39 & 17.06
& 16.28 & 17.31 & 16.78
& 67.97 & 65.52 & 66.72 \\

\rowcolor{gray!10}
FedPGP~\cite{cui2024fedpgp}
& 66.80 & 62.36 & 64.50
& 84.01 & 82.17 & 83.08
& 26.66 & 25.07 & 25.84
& 28.70 & 13.48 & 18.34
& 65.10 & 61.64 & 63.32 \\

FedOTP~\cite{li2024fedotp}
& \textbf{95.56} & 45.35 & 61.51
& \textbf{98.98} & 67.32 & 80.14
& 86.12 & 42.13 & 56.58
& \underline{91.42} & 11.64 & 20.65
& \textbf{99.10} & 50.41 & 66.83 \\

\rowcolor{gray!10}
FedPHA~\cite{fang2025fedpha}
& \underline{95.34} & 25.48 & 40.21
& \underline{98.96} & 55.52 & 71.13
& \textbf{95.79} & 22.38 & 36.28
& \textbf{92.27} & 7.28 & 13.50
& \underline{98.98} & 19.60 & 32.72 \\

pFedMoAP~\cite{luo2025pFedMoAP}
& 81.04 & 69.40 & 74.77
& 83.41 & 81.38 & 82.38
& 84.00 & \underline{49.25} & \underline{62.09}
& 18.32 & 19.17 & 18.73
& 74.10 & 67.01 & \underline{70.38} \\


\rowcolor{gray!10}
DP-FPL~\cite{tran2025dpfpl}
& 81.46 & \textbf{79.89} & \underline{80.67}
& 88.02 & \textbf{87.28} & \underline{87.65}
& 80.70 & 44.66 & 57.50
& 22.92 & \underline{22.71} & \underline{22.82}
& 70.43 & \underline{66.28} & 68.29 \\

\hline
\rowcolor{lightblue}
FedSEPT
& 92.22 & 71.82 & \textbf{80.75}
& 97.68 & \underline{82.40} & \textbf{89.39}
& \underline{92.72} & \textbf{50.11} & \textbf{65.06}
& 79.89 & \textbf{25.99} & \textbf{39.22}
& 97.56 & \textbf{83.95} & \textbf{90.24} \\

\noalign{\hrule height 0.8pt}
\end{tabularx}
\end{table*} 

\begin{table*}[t]
\centering
\caption{Average test accuracy (\%) under practical label skew at $\epsilon=1$. The \textbf{best} and the \underline{second} are marked.}
\vspace{-2mm}
\label{tab:practical_label_skew}
\fontsize{8}{8}\selectfont
\renewcommand{\arraystretch}{1.15}
\setlength{\tabcolsep}{0.8pt}
\begin{tabularx}{\linewidth}{l||*{3}{>{\centering\arraybackslash}X}|*{3}{>{\centering\arraybackslash}X}|*{3}{>{\centering\arraybackslash}X}|*{3}{>{\centering\arraybackslash}X}|*{3}{>{\centering\arraybackslash}X}|*{3}{>{\centering\arraybackslash}X}}
\noalign{\hrule height 0.8pt}
\rowcolor{gray!25}
& \multicolumn{9}{c|}{\textbf{CIFAR-10}} & \multicolumn{9}{c}{\textbf{CIFAR-100}} \\
\cline{2-10}\cline{11-19}
\rowcolor{gray!25}
& \multicolumn{3}{c}{$\beta=0.1$} & \multicolumn{3}{c}{$\beta=0.3$} & \multicolumn{3}{c|}{$\beta=0.5$} & \multicolumn{3}{c}{$\beta=0.1$} & \multicolumn{3}{c}{$\beta=0.3$} & \multicolumn{3}{c}{$\beta=0.5$} \\
\rowcolor{gray!25}
\multicolumn{1}{c||}{\multirow{-3}{*}{Methods}} & In & Cross & HM & In & Cross & HM & In & Cross & HM & In & Cross & HM & In & Cross & HM & In & Cross & HM \\
\hline\hline
PromptFL~\cite{guo2023promptfl}
& 86.03 & \underline{85.50} & 85.76 & 87.32 & 85.77 & 86.54 & 87.03 & 85.65 & 86.33 & 61.43 & \textbf{61.47} & 61.45 & 62.04 & \underline{61.98} & 62.01 & 61.87 & 61.86 & 61.86 \\
\rowcolor{gray!10}
FedPGP~\cite{cui2024fedpgp}
& 84.63 & 83.62 & 84.12 & 87.25 & \underline{86.40} & \underline{86.82} & 85.68 & 85.02 & 85.35 & 54.89 & 54.99 & 54.94 & 51.17 & 51.36 & 51.26 & 56.05 & 55.78 & 55.91 \\
FedOTP~\cite{li2024fedotp}
& \underline{96.63} & 59.97 & 74.01 & \textbf{95.54} & 76.86 & 85.19 & \underline{93.77} & 82.14 & \underline{87.57} & \underline{84.61} & 48.27 & \underline{61.47} & \underline{77.20} & 59.29 & \underline{67.07} & \underline{73.91} & \underline{62.90} & \underline{67.96} \\
\rowcolor{gray!10}
FedPHA~\cite{fang2025fedpha}
& \textbf{96.69} & 54.95 & 70.08 & \underline{95.28} & 70.90 & 81.30 & \textbf{93.79} & 79.29 & 85.93 & \textbf{85.64} & 46.92 & 60.63 & \textbf{78.28} & 56.62 & 65.71 & \textbf{74.38} & 60.49 & 66.72 \\
pFedMoAP~\cite{luo2025pFedMoAP}
& 91.89 & 79.43 & 85.21
& 91.93 & 81.92 & 86.64
& 90.84 & 83.28 & 86.90
& 63.64 & 57.07 & 60.17 
& 61.11 & 58.71 & 59.89 
& 60.54 & 58.77 & 59.64 \\ 
\rowcolor{gray!10}
DP-FPL~\cite{tran2025dpfpl}
& 86.42 & \textbf{86.49} & \underline{86.45} & 87.01 & \textbf{86.63} & \underline{86.82} & 87.37 & \textbf{87.07} & 87.22 & 56.45 & 54.65 & 55.54 & 54.29 & 54.15 & 54.22 & 56.26 & 56.32 & 56.29 \\
\hline
\rowcolor{lightblue}
FedSEPT
& 92.59 & 83.48 & \textbf{87.80}
& 92.77 & 85.81 & \textbf{89.15} 
& 90.85 & \underline{86.86} & \textbf{88.81} 
& 77.54 & \underline{58.67} & \textbf{66.80} 
& 72.42 & \textbf{63.64} & \textbf{67.75} 
& 70.27 & \textbf{65.96} & \textbf{68.05} \\
\noalign{\hrule height 0.8pt}
\end{tabularx}
\end{table*} 

\subsection{Experimental Setup} 
\noindent \textbf{Datasets.\ } 
We evaluate FedSEPT across \textbf{11} benchmark datasets, categorized into three statistical heterogeneity settings.
\begin{itemize}[leftmargin=1.5em, itemsep=0.2em, topsep=0.2em]
\item \textbf{Pathological label skew}. For fine-grained classification, we use five CLIP benchmark datasets: Food101~\cite{bossard2014food}, Caltech101~\cite{10.5555/1032643.1033069}, Flowers102~\cite{flower4756141}, DTD~\cite{10.1109/CVPR.2014.461} and OxfordPets~\cite{parkhi12a}. 
Following standard few-shot protocols~\cite{cui2024fedpgp,fang2025fedpha}, each client is assigned disjoint class sets to construct strict label heterogeneity.

\item \textbf{Practical label skew}. 
For larger-scale and more realistic non-IID settings, we partition CIFAR-10 and CIFAR-100~\cite{krizhevsky2009learning} via a symmetric Dirichlet distribution $\beta \in \{0.1, 0.3, 0.5\}$, thereby simulating heterogeneous client-level label distributions in edge networks.

\item \textbf{Domain and label skews}.
To assess robustness against compound shifts, we employ four domain adaptation benchmarks: PACS (4 domains)~\cite{gong2012geodesic}, Office31 (3 domains)~\cite{saenko2010adapting}, OfficeHome (4 domains)~\cite{venkateswara2017deep}, and DomainNet (5 domains)~\cite{peng2019moment}.
We apply a hierarchical partitioning strategy by first allocating specific domains to distinct clients, and then applying a secondary Dirichlet partition $\beta \in \{0.1, 0.3, 0.5\}$ to inject intra-domain label skew.
\end{itemize}
This rigorously tests the model's ability to disentangle domain styles from class features.
Details are provided in Appendix~\ref{app:dataset}.

\vspace{1mm}
\noindent \textbf{Counterparts.\ }
We compare FedSEPT with personalized federated prompt learning methods, including PromptFL~\cite{guo2023promptfl}, FedPGP~\cite{cui2024fedpgp}, FedOTP~\cite{li2024fedotp}, FedPHA~\cite{fang2025fedpha}, pFedMoAP~\cite{luo2025pFedMoAP}, and the privacy-preserving baseline DP-FPL~\cite{tran2025dpfpl}. For a fair comparison under the same local-DP threat model, we apply DP only to the parameters uploaded to the server under an identical privacy budget. Accordingly, for DP-FPL, noise is added only to its communicated prompt component, while the locally decomposed prompt component remains unperturbed and is never uploaded. The non-private counterparts are similarly equipped with local DP on their shared global prompts.

\vspace{1mm}
\noindent \textbf{Implementation Details.\ }
We follow a unified implementation protocol~\cite{fang2025fedpha, yang2025feddda}. 
We use the pre-trained CLIP~\cite{radford2021clip} with a ViT-B/16 backbone. 
We run communication rounds $T = 50$ with local epochs $E=1$. Training and testing batch sizes are 32 and 128, respectively. Local optimization is SGD with learning rate 0.001. 
For the CLIP datasets, we set $K=10$ clients, each holding a disjoint subset of classes. 
For CIFAR-10 and CIFAR-100, we scale to $K=100$ clients with a 10\% per-round participation rate to mimic realistic device availability. For multi-domain datasets, data from each domain is split across two clients, making $K$ twice the number of domains.
Prompts of length $L=16$ and dimension $d=512$ are initialized from a normal distribution. 
Unless otherwise specified, we set the privacy budget to $\epsilon=1.0$ and $\delta=10^{-5}$, with clipping threshold $R=1.0$. The noise scale $\sigma$ is computed according to Theorem~\ref{thm:privacy}. 
FedSEPT uses $M=4$ experts, a subspace rank of $r=16$, and four attention heads for routing, with $\lambda_{\mathrm{div}}=\lambda_{\mathrm{gate}}=10$.
More details are provided in Appendix~\ref{app:baseline_hyperparameters}.

\vspace{1mm}
\noindent \textbf{Evaluation Metrics.\ }
We report two complementary metrics under heterogeneous federated settings.
\textit{In-Client} denotes the accuracy on each client’s test set, reflecting client-level personalization ability.
\textit{Cross-Client} denotes the accuracy on the aggregated test sets of all other clients, reflecting generalization beyond the local client distribution.
We further report their \textit{harmonic mean (HM)},
$\mathrm{HM} = \frac{2 \cdot \mathrm{Acc}_{\text{in}} \cdot \mathrm{Acc}_{\text{cross}}}{\mathrm{Acc}_{\text{in}} + \mathrm{Acc}_{\text{cross}}}$,
which better captures the trade-off between personalization and cross-client generalization.
For each method, we report the mean client performance over the last 5 rounds.

\subsection{Performance Comparison} 

\begin{figure}[t]
    \centering
   \includegraphics[width=1\linewidth]{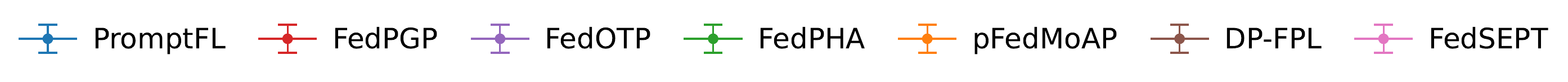}
   \vspace{1mm}
   \includegraphics[width=0.98\linewidth]{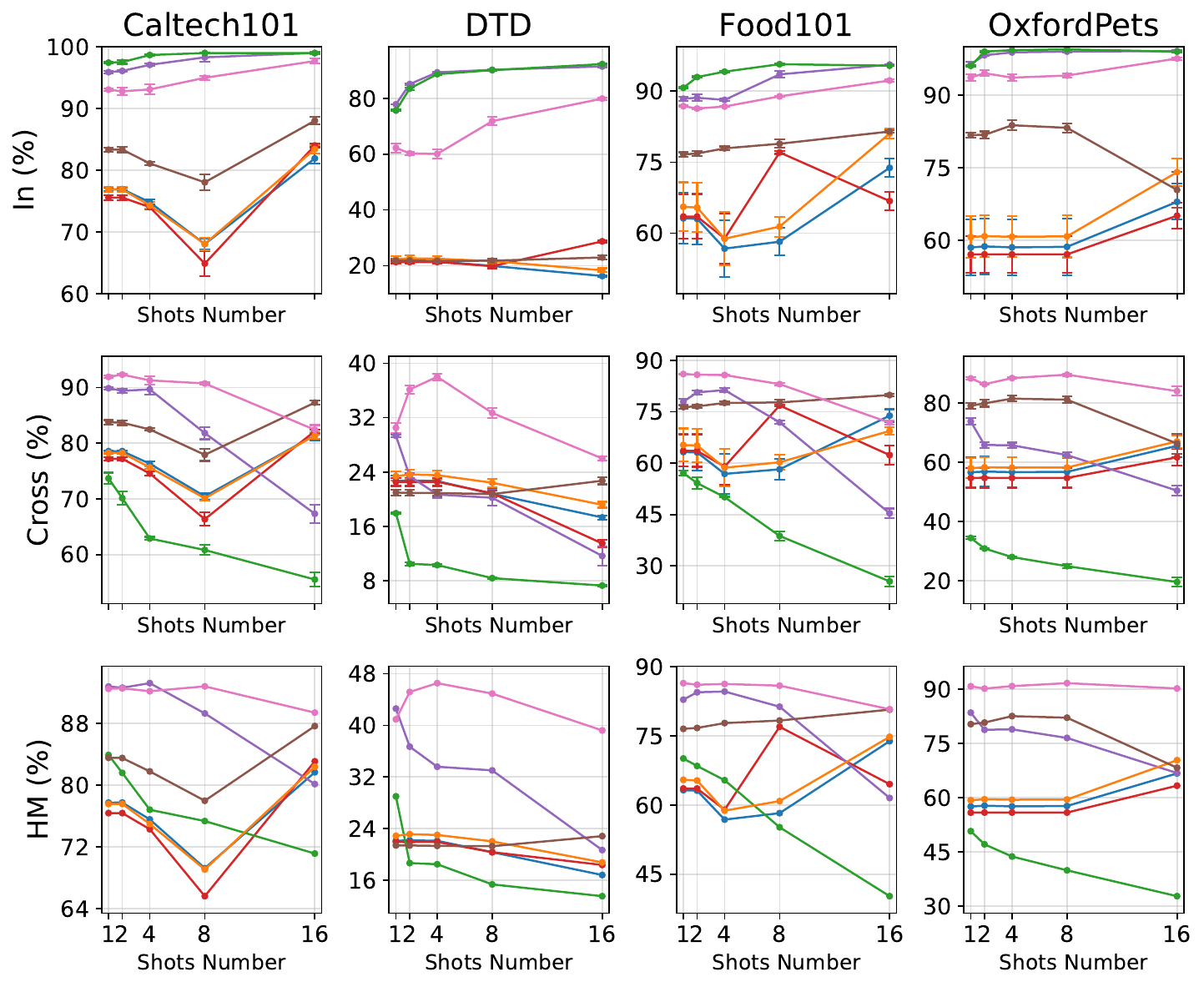}
   \vspace{-2mm}
   \caption{Effect of the number of shots.}
    \label{fig:shots_number_grid}
\end{figure}

\begin{table*}[t]
\centering
\caption{Average test accuracy (\%) under domain and label skews at $\epsilon=1$. The \textbf{best} and the \underline{second} are marked.}
\vspace{-2mm}
\label{tab:feature_label_skews}
\fontsize{8}{8}\selectfont
\renewcommand{\arraystretch}{1.15}
\setlength{\tabcolsep}{0.8pt}

\begin{tabularx}{\linewidth}{l||*{3}{>{\centering\arraybackslash}X}|*{3}{>{\centering\arraybackslash}X}|*{3}{>{\centering\arraybackslash}X}|*{3}{>{\centering\arraybackslash}X}|*{3}{>{\centering\arraybackslash}X}|*{3}{>{\centering\arraybackslash}X}}
\noalign{\hrule height 0.8pt}
\rowcolor{gray!25}
& \multicolumn{9}{c|}{\textbf{PACS}} & \multicolumn{9}{c}{\textbf{Office31}} \\
\cline{2-10}\cline{11-19}
\rowcolor{gray!25}
& \multicolumn{3}{c}{$\beta=0.1$}
& \multicolumn{3}{c}{$\beta=0.3$}
& \multicolumn{3}{c|}{$\beta=0.5$}
& \multicolumn{3}{c}{$\beta=0.1$}
& \multicolumn{3}{c}{$\beta=0.3$}
& \multicolumn{3}{c}{$\beta=0.5$} \\
\rowcolor{gray!25}
\multicolumn{1}{c||}{\multirow{-3}{*}{Methods}}
& In & Cross & HM
& In & Cross & HM
& In & Cross & HM
& In & Cross & HM
& In & Cross & HM
& In & Cross & HM \\
\hline\hline

PromptFL~\cite{guo2023promptfl}
& 97.15 & 92.16 & 94.59
& 96.13 & 94.97 & 95.55
& 95.49 & 94.50 & 94.99
& 78.00 & 79.54 & 78.76
& 78.54 & 79.59 & 79.06
& 77.29 & 78.59 & 77.93 \\

\rowcolor{gray!10}
FedPGP~\cite{cui2024fedpgp}
& 96.51 & 83.36 & 89.45
& 90.15 & 88.57 & 89.35
& 96.96 & 92.63 & 94.75
& 64.49 & 60.31 & 62.33
& 63.18 & 62.41 & 62.79
& 70.23 & 69.49 & 69.86 \\

FedOTP~\cite{li2024fedotp}
& \textbf{98.70} & 87.12 & 92.55
& \underline{98.03} & 92.73 & 95.31
& \underline{98.07} & 93.28 & 95.62
& \underline{94.73} & 78.44 & \underline{85.82}
& \underline{93.40} & \underline{82.38} & \underline{87.54}
& \underline{93.76} & \underline{83.53} & 88.35 \\

\rowcolor{gray!10}
FedPHA~\cite{fang2025fedpha}
& \underline{98.63} & 85.41 & 91.55
& \textbf{98.15} & 91.81 & 94.87
& \textbf{98.15} & 91.92 & 94.93
& \textbf{95.75} & 76.93 & 85.31
& \textbf{94.02} & 81.40 & 87.26
& \textbf{94.88} & 82.68 & \underline{88.36} \\

pFedMoAP~\cite{luo2025pFedMoAP}
& 97.97 & 87.72 & 92.56
& 97.80 & 92.57 & 95.11
& 97.67 & 92.75 & 95.15
& 83.64 & 77.61 & 80.52
& 83.45 & 79.87 & 81.62
& 83.49 & 79.70 & 81.55 \\

\rowcolor{gray!10}
DP-FPL~\cite{tran2025dpfpl}
& 97.15 & \underline{92.26} & \underline{94.64}
& 97.37 & \underline{95.08} & \underline{96.21}
& 97.39 & \underline{95.30} & \underline{96.33}
& 79.72 & \underline{80.48} & 80.10
& 79.71 & 80.62 & 80.16
& 81.09 & 81.72 & 81.41 \\

\hline
\rowcolor{lightblue}
FedSEPT
& 97.61 & \textbf{92.54} & \textbf{95.01}
& 97.94 & \textbf{95.21} & \textbf{96.56}
& 98.06 & \textbf{95.65} & \textbf{96.84}
& 91.48 & \textbf{81.49} & \textbf{86.20}
& 91.70 & \textbf{84.45} & \textbf{87.93}
& 92.87 & \textbf{84.50} & \textbf{88.49} \\

\noalign{\hrule height 0.8pt}
\end{tabularx}

\vspace{1mm}

\begin{tabularx}{\linewidth}{l||*{3}{>{\centering\arraybackslash}X}|*{3}{>{\centering\arraybackslash}X}|*{3}{>{\centering\arraybackslash}X}|*{3}{>{\centering\arraybackslash}X}|*{3}{>{\centering\arraybackslash}X}|*{3}{>{\centering\arraybackslash}X}}
\noalign{\hrule height 0.8pt}
\rowcolor{gray!25}
& \multicolumn{9}{c|}{\textbf{OfficeHome}}
& \multicolumn{9}{c}{\textbf{DomainNet}} \\
\cline{2-10}\cline{11-19}
\rowcolor{gray!25}
& \multicolumn{3}{c}{$\beta=0.1$}
& \multicolumn{3}{c}{$\beta=0.3$}
& \multicolumn{3}{c|}{$\beta=0.5$}
& \multicolumn{3}{c}{$\beta=0.1$}
& \multicolumn{3}{c}{$\beta=0.3$}
& \multicolumn{3}{c}{$\beta=0.5$} \\
\rowcolor{gray!25}
\multicolumn{1}{c||}{\multirow{-3}{*}{Methods}}
& In & Cross & HM
& In & Cross & HM
& In & Cross & HM
& In & Cross & HM
& In & Cross & HM
& In & Cross & HM \\
\hline\hline

PromptFL~\cite{guo2023promptfl}
& 79.98 & \underline{79.89} & 79.93
& 79.71 & 79.66 & 79.68
& 80.05 & 79.93 & 79.99
& 71.88 & \underline{61.72} & 66.41
& 71.90 & \underline{61.34} & 66.20
& 71.77 & 62.05 & 66.56 \\

\rowcolor{gray!10}
FedPGP~\cite{cui2024fedpgp}
& 79.92 & 76.25 & 78.04
& 77.86 & 75.35 & 76.58
& 79.02 & 77.41 & 78.21
& 74.61 & 57.09 & 64.68
& 73.01 & 60.60 & 66.23
& 73.26 & 61.50 & 66.87 \\

FedOTP~\cite{li2024fedotp}
& \textbf{89.92} & 73.00 & 80.58
& \underline{88.68} & 78.95 & \underline{83.53}
& \underline{88.35} & 79.53 & \underline{83.71}
& 82.60 & 54.19 & 65.45
& \textbf{82.09} & 56.01 & 66.59
& \textbf{81.92} & 57.79 & 67.77 \\

\rowcolor{gray!10}
FedPHA~\cite{fang2025fedpha}
& \underline{89.47} & 73.39 & 80.64
& \textbf{88.90} & 77.61 & 82.87
& \textbf{88.38} & 78.60 & 83.20
& \textbf{83.07} & 51.70 & 63.73
& \underline{81.91} & 56.80 & 67.08
& \underline{81.61} & 57.58 & 67.52 \\

pFedMoAP~\cite{luo2025pFedMoAP}
& 84.70 & 76.24 & 80.25
& 83.02 & 78.82 & 80.86
& 83.17 & 79.16 & 81.11
& 70.91 & \textbf{62.08} & 66.20
& 80.37 & 57.74 & 67.20
& 80.16 & 58.09 & 67.36 \\

\rowcolor{gray!10}
DP-FPL~\cite{tran2025dpfpl}
& 82.40 & \textbf{81.60} & \underline{82.00}
& 82.00 & \textbf{81.26} & 81.63
& 82.39 & \underline{81.88} & 82.14
& 76.59 & 59.22 & \underline{66.79}
& 75.34 & \textbf{61.38} & \underline{67.65}
& 75.18 & \textbf{63.45} & \underline{68.82} \\

\hline
\rowcolor{lightblue}
FedSEPT
& 88.81 & 78.69 & \textbf{83.44}
& 87.89 & \underline{80.90} & \textbf{84.25}
& 87.48 & \textbf{82.22} & \textbf{84.77}
& \underline{83.03} & 56.25 & \textbf{67.07}
& 81.49 & 59.17 & \textbf{68.56}
& 81.26 & \underline{61.31} & \textbf{69.89} \\

\noalign{\hrule height 0.8pt}
\end{tabularx}
\end{table*}

\begin{figure*}[t]
\centering
\setlength{\tabcolsep}{3pt}
\begin{tabular}{@{}ccc@{}}
\includegraphics[
    width=0.195\textwidth
]{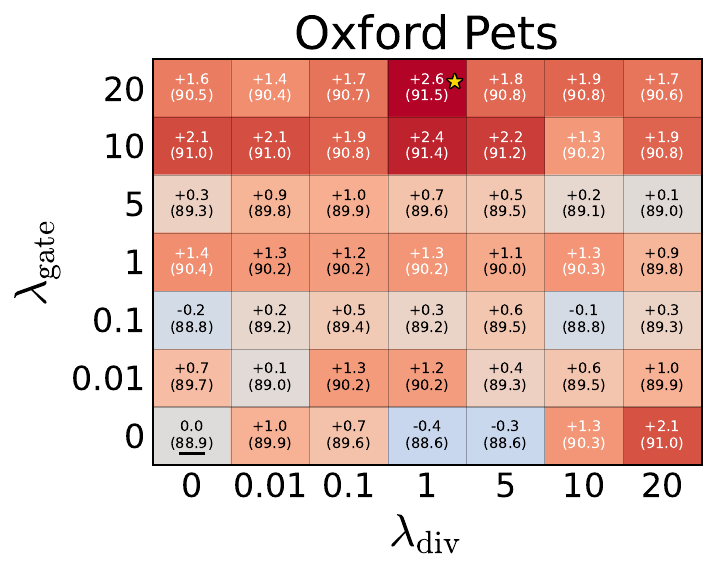}
&
\includegraphics[
    width=0.385\textwidth
]{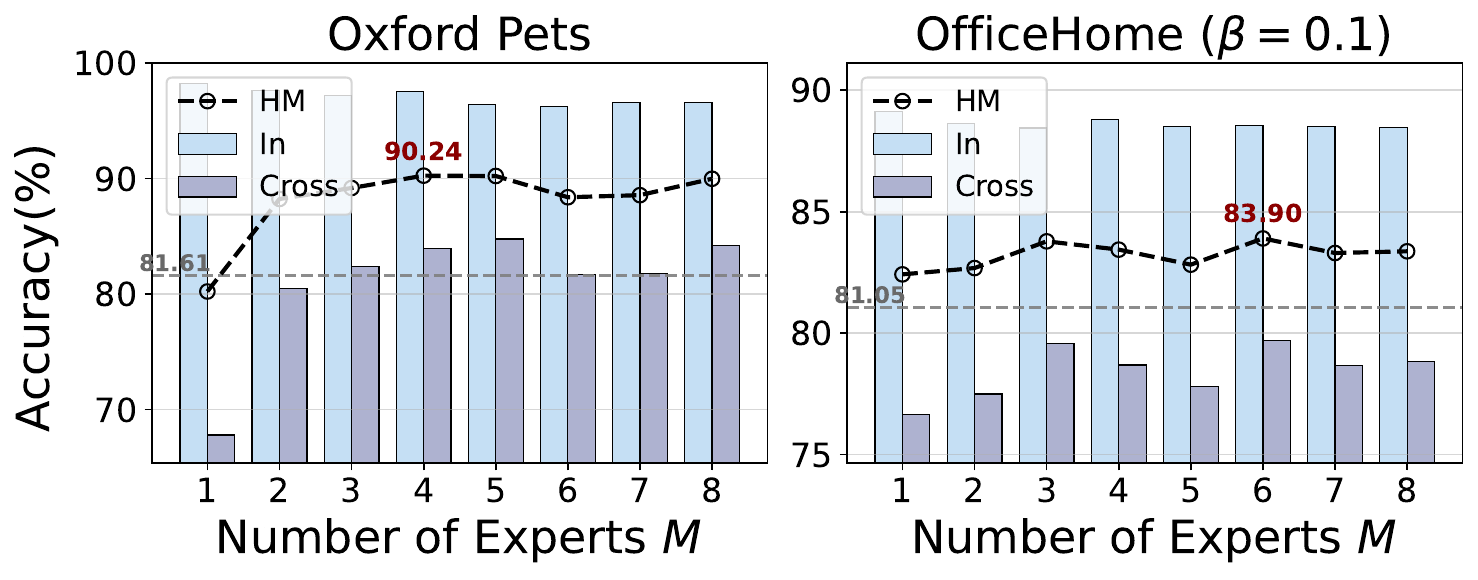}
&
\includegraphics[
    width=0.385\textwidth
]{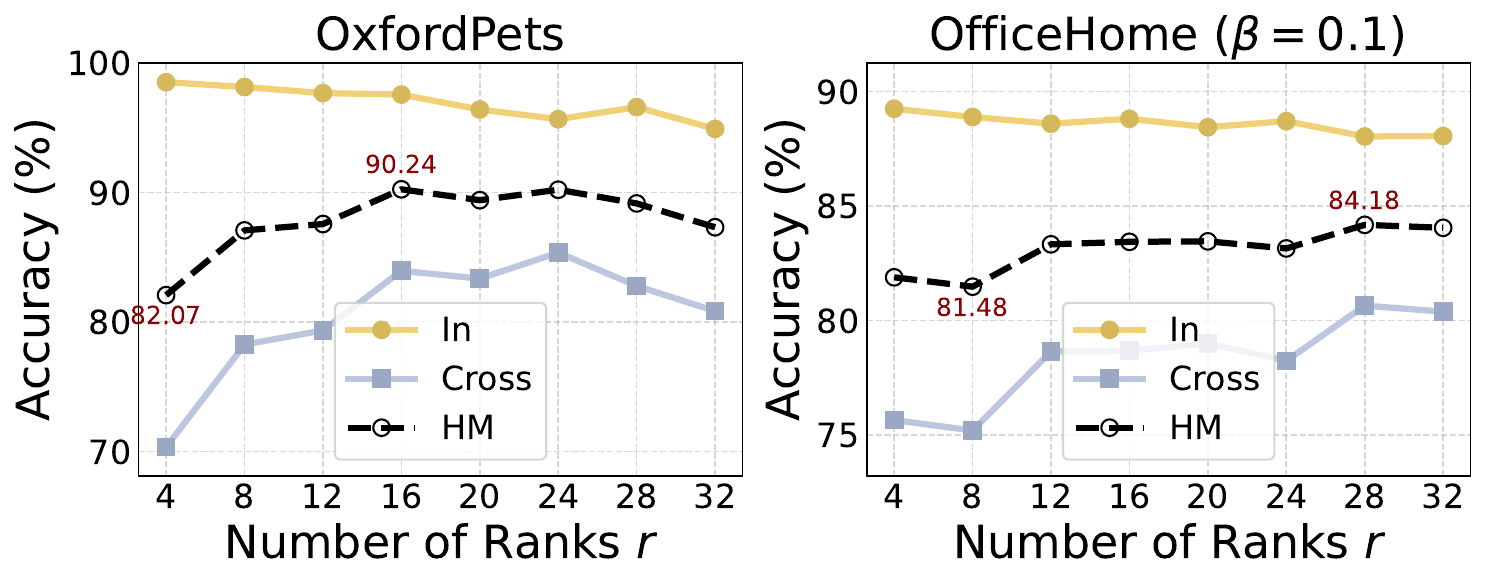}
\\[-1mm]
\small \textbf{(a) Weight sensitivity} &
\small \textbf{(b) Effect of the low-rank expert} &
\small \textbf{(c) Effect of the subspace rank}
\end{tabular}

\vspace{-2mm}
\caption{Hyperparameter analysis.}
\label{fig:hyperparams}
\end{figure*}

\noindent \textbf{Pathological Label Skew and Data Scarcity.\ }
As shown in Tab.~\ref{tab:pathological_label_skew}, FedSEPT achieves the best \textit{HM} on all five datasets under pathological label skew. Its gains are particularly pronounced on Food101, DTD, and OxfordPets, where stronger \textit{Cross-Client} accuracy than highly personalized methods such as FedOTP and FedPHA leads to substantially higher \textit{HM} while retaining competitive
\textit{In-Client} performance. Fig.~\ref{fig:shots_number_grid} further shows that reducing the number of shots mainly impairs cross-client generalization. Nevertheless, FedSEPT maintains strong \textit{Cross-Client} accuracy and competitive \textit{In-Client} performance across most low-shot settings. These results indicate that FedSEPT mitigates over-personalization and remains robust under both severe label isolation and limited local supervision.

\vspace{1mm}
\noindent \textbf{Practical Label Skew.\ }
Tab.~\ref{tab:practical_label_skew} shows that FedSEPT remains consistently effective in larger-scale and more realistic federated settings. 
While PromptFL and FedPGP are competitive in some \textit{Cross-Client} cases, they are generally weaker in \textit{In-Client} accuracy; conversely, FedOTP and FedPHA often preserve stronger local accuracy but generalize less effectively across clients.
As a result, FedSEPT achieves the best overall trade-off, especially on CIFAR-100. 

\vspace{1mm}
\noindent \textbf{Domain and Label Skews.\ }
From Tab.~\ref{tab:feature_label_skews}, FedSEPT achieves the best HM across all domain-and-label-skew settings. On PACS, the \textit{HM} gains primarily stem from stronger \textit{Cross-Client} accuracy while retaining competitive \textit{In-Client} performance. On Office31 and OfficeHome, the gains are driven primarily by substantially stronger \textit{In-Client} accuracy while maintaining competitive \textit{Cross-Client} performance. On DomainNet, FedSEPT achieves a better balance between local fitting and cross-client transfer.

\subsection{Diagnostic Analysis}

\noindent \textbf{Hyperparameter Analysis.\ }
As shown in Fig.~\ref{fig:hyperparams}, most combinations of $\mathcal{L}_{\mathrm{div}}$ and $\mathcal{L}_{\mathrm{gate}}$ outperform the unregularized baseline, with stronger gate regularization generally yielding stable improvements and a broad high-performing region. Increasing the number of experts initially enhances transferable diversity, whereas larger $M$ provides no consistent gains and may introduce redundancy. Similarly, a moderate subspace rank offers a favorable balance between representation capacity and parameter efficiency, while larger ranks yield only limited improvements. We therefore adopt $\lambda_{\mathrm{div}}=\lambda_{\mathrm{gate}}=10$, $M=4$, and $r=16$ as robust default settings.

\vspace{1mm}
\noindent \textbf{Ablation Study.\ }
We evaluate the main components under three representative data skews.
\textbf{Single} removes both SEM and IEF, reducing FedSEPT to a single full-dimensional prompt.
\textbf{Full MEPs} retains multiple full-dimensional experts but removes the low-rank decomposition, while \textbf{w/o $\mathbf{R}_k$} removes the private residual.
For fusion, \textbf{Top-1} performs hard expert selection, whereas \textbf{Uniform} assigns equal weights without instance-aware routing.
All component variants yield lower HM than FedSEPT across OxfordPets, CIFAR-10, and OfficeHome.
Full MEPs remains consistently inferior, showing that the gains do not arise merely from adding more prompts.
Removing $\mathbf{R}_k$ causes substantial degradation, especially on OxfordPets and OfficeHome, confirming the value of local personalization.
Top-1 and Uniform also underperform the full model, demonstrating the effectiveness of soft instance-aware fusion.

\begin{table}[t]
\centering
\caption{Ablation study of FedSEPT under three skews.}
\vspace{-2mm}
\label{tab:ablation_multi_full_selected_hm_partial}
\fontsize{8}{8}\selectfont
\renewcommand{\arraystretch}{1.2}
\setlength{\tabcolsep}{2pt}
\begin{tabular}{l|ccc|ccc|ccc}
\noalign{\hrule height 0.8pt}
\rowcolor{gray!25}
& \multicolumn{3}{c|}{OxfordPets}
& \multicolumn{3}{c|}{CIFAR-10 ($\beta=0.1$)}
& \multicolumn{3}{c}{OfficeHome ($\beta=0.1$)} \\
\cline{2-10}
\rowcolor{gray!25}
\multirow{-2}{*}{Method}
& In & Cross & HM
& In & Cross & HM
& In & Cross & HM \\
\hline\hline

Single
& 78.05 & 75.22 & 76.61
& 85.02 & 85.33 & 85.17
& 81.07 & 80.07 & 80.57 \\

\rowcolor{gray!10}
Full MEPs
& 77.66 & 73.92 & 75.74
& 86.20 & 86.62 & 86.41
& 81.62 & 80.85 & 81.24 \\

w/o $\mathbf{R}_k$
& 74.89 & 72.08 & 73.46
& 86.65 & 87.12 & 86.88
& 77.40 & 77.18 & 77.29 \\

\rowcolor{gray!10}
Top-1
& 97.53 & 80.35 & 88.11
& 90.86 & 75.56 & 82.51
& 88.20 & 77.51 & 82.52 \\

Uniform
& 97.29 & 82.75 & 89.43
& 92.20 & 81.87 & 86.73
& 87.93 & 77.74 & 82.53 \\

\hline
\rowcolor{lightblue}
FedSEPT
& 97.56 & 83.95 & \textbf{90.24}
& 92.59 & 83.48 & \textbf{87.80}
& 88.81 & 78.69 & \textbf{83.44} \\

\noalign{\hrule height 0.8pt}
\end{tabular}
\end{table}

\subsection{System Efficiency}
We evaluate client-side efficiency in terms of bidirectional communication per round, inference latency and FLOPs, mean training time per round, and peak GPU memory during single-batch inference. \textbf{FedSEPT-NC} denotes the first inference pass before expert text features are cached, whereas \textbf{FedSEPT-PF} replaces logit-level fusion with prompt-level fusion and repeatedly invokes the text encoder for input-dependent prompts. With $L=16$, $d=512$, $M=4$, and $r=16$, full-prompt methods transmit $2Ld=16{,}384$ floating-point parameters per round, including upload and download, while FedSEPT exchanges only $2MLr=2{,}048$ shared-factor parameters, reducing communication from $64$ to $8$ KiB in 32-bit precision. As shown in Tab.~\ref{tab:efficiency}, cached FedSEPT achieves the lowest inference latency and FLOPs at $2.346$ ms and $35.1$ GFLOPs. FedSEPT-NC incurs a one-time cost for computing expert text features, whereas FedSEPT-PF is costlier in latency, training time, and memory because it repeatedly executes the text encoder. Although FedSEPT adds modest training overhead over several single-prompt baselines, its memory usage remains comparable, confirming the efficiency of low-dimensional communication and cached logit-level fusion.

\begin{table}[t]
\centering
\caption{Client-side efficiency.}
\label{tab:efficiency}
\vspace{-2mm}
\fontsize{8}{8}\selectfont
\setlength{\tabcolsep}{2.1pt}
\renewcommand{\arraystretch}{1.2}
\begin{tabular}{l|ccccc}
\noalign{\hrule height 0.8pt}
\rowcolor{gray!25}
Method & Comm. (KiB) & Lat. (ms) & FLOPs (G) & Train (s/r) & Mem. (MB) \\
\hline\hline
PromptFL & 64.00 & 2.889 & 255.6 & 0.787 & 809.7 \\
\rowcolor{gray!10}
FedPGP & 64.00 & 2.893 & 255.6 & 0.892 & 809.7 \\
FedOTP & 64.00 & 3.395 & 476.1 & 0.927 & 821.6 \\
\rowcolor{gray!10}
FedPHA & 64.00 & 2.932 & 255.6 & 0.919 & 826.4 \\
pFedMoAP & 64.00 & 2.917 & 255.7 & 0.892 & 806.5 \\
\rowcolor{gray!10}
DP-FPL & 64.00 & 2.901 & 255.6 & 0.858 & 809.8 \\
\hline
FedSEPT-NC & 8.00 & 4.592 & 917.1 & 1.110 & 813.8 \\
\rowcolor{gray!10}
FedSEPT-PF & 8.00 & 17.340 & 255.6 & 2.798 & 3546.7 \\
\rowcolor{lightblue}
\hline
FedSEPT & 8.00 & 2.346 & 35.1 & 1.051 & 814.4 \\
\noalign{\hrule height 0.8pt}
\end{tabular}
\end{table}

\subsection{Defense Assessment Under Privacy Attacks}
\label{sec:attack}

We evaluate two privacy attacks: membership inference attacks (MIAs)~\cite{nasr2019comprehensive,bai2024miafl} and gradient inversion attacks (GIAs)~\cite{zhu2019deep,geiping2020inverting}.

\vspace{1mm}
\noindent\textbf{Membership Inference.\ }
We consider logistic-regression attackers under two threat models and report mean AUC, final AUC, and HM accuracy. 
\textbf{(1)} The \emph{update-only} attacker uses server-visible client updates. From each uploaded prompt tensor, it extracts the $\ell_2$ norm, coordinate-wise quantiles, sparsity, and temporal drift, including update norms and cosine similarity over a sliding window. A standardized logistic-regression classifier then distinguishes member samples from non-members drawn from other clients. As shown in Fig.~\ref{fig:mia_attacks}(a), the attack remains close to random guessing, with FedSEPT's final AUC ranging from $0.4487$ to $0.5021$ across privacy budgets. 
\textbf{(2)} The stronger \emph{update-and-query} attacker additionally queries the final personalized model and augments the update features with prediction statistics, including maximum confidence, prediction margin, entropy, true-class probability, and cross-entropy loss. Since these outputs depend on unshared local states, they provide a stronger leakage channel. Fig.~\ref{fig:mia_attacks}(b) shows that local DP reduces FedSEPT's final AUC from $0.9471$ without DP to $0.5521$--$0.5790$ for $\epsilon\in\{0.25,0.5,0.75,1.0\}$, while preserving HM accuracy of up to $90.68\%$. FedSEPT may exhibit slightly higher AUC than low-utility baselines because its accurate personalized predictions provide more informative query features; privacy leakage should therefore be interpreted jointly with utility.

\vspace{1mm}
\noindent\textbf{Multi-client Gradient Inversion.\ }
We audit four clients with four samples each. Since the server observes only prompt parameters before and after local training rather than per-step gradients, the attacker approximates an effective gradient as
$\hat{g}=-(\theta_{\mathrm{post}}-\theta_{\mathrm{pre}})/\eta$,
where $\eta$ is the local learning rate. Following DLG~\cite{zhu2019deep} and Inverting Gradients~\cite{geiping2020inverting}, the attacker initializes a dummy image from random noise and optimizes it with Adam so that the surrogate prompt-learner gradient matches $\hat{g}$. We use multiple restarts, retain the lowest-loss reconstruction, apply total-variation regularization, and clamp pixels to the valid range after each step.
We report pixel-level reconstruction quality using PSNR, semantic leakage using CLIP cosine similarity and top-1 agreement, and update matching using gradient cosine similarity. As shown in Tab.~\ref{tab:GIA_results}, local DP suppresses semantic leakage for FedSEPT, reducing CLIP cosine similarity from $0.529$ to $-0.013$ and top-1 agreement from $0.938$ to $0$.

\begin{figure}[t]
\centering
\includegraphics[width=0.48\textwidth]{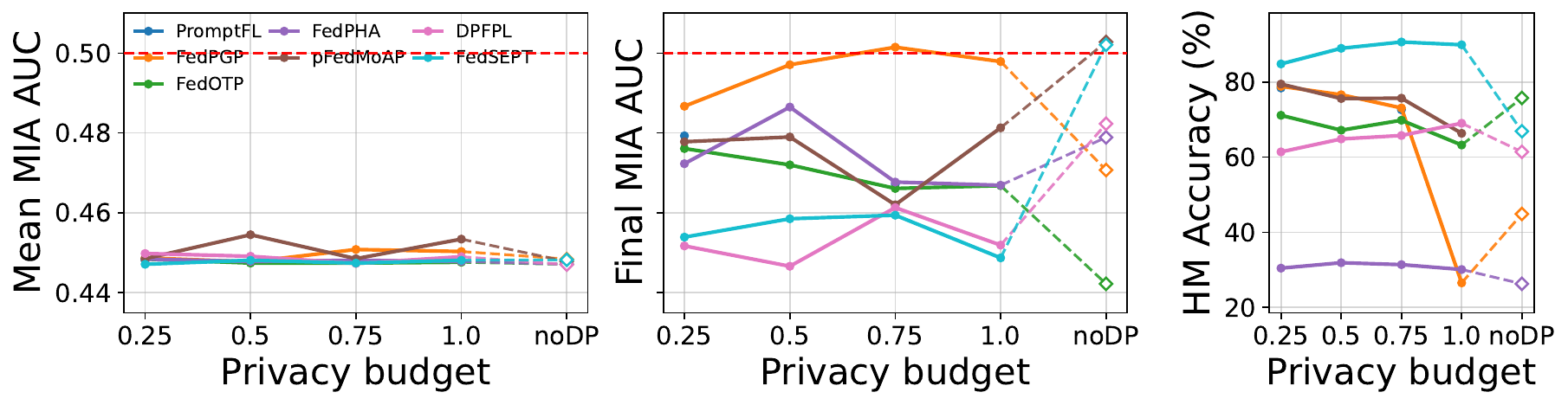}
\vspace{-4mm}
\small \textbf{(a) Update-only MIAs}

\vspace{4mm}
\includegraphics[width=0.48\textwidth]{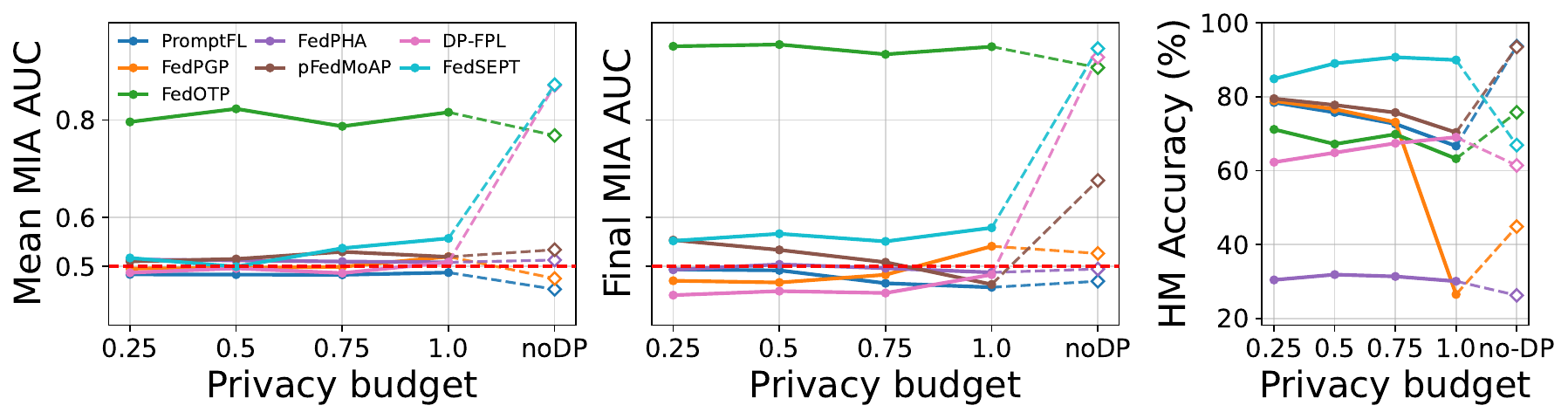}
\vspace{-4mm}
\small \textbf{(b) Update-and-query MIAs}
 
\vspace{2mm}
\caption{MIAs under two threat models on OxfordPets.}
\label{fig:mia_attacks}

\end{figure}

\begin{table}[t]
\centering
\caption{Multi-client GIAs on OxfordPets.}
\label{tab:GIA_results}
\vspace{-2mm}
\fontsize{7.8}{7}\selectfont
\setlength{\tabcolsep}{1.2pt}
\renewcommand{\arraystretch}{1.2}
\begin{tabular}{l|cccc|cccc}
\noalign{\hrule height 0.8pt}
\rowcolor{gray!25}
& \multicolumn{4}{c|}{No DP}
& \multicolumn{4}{c}{$\epsilon=1$} \\
\cline{2-9}
\rowcolor{gray!25}
\multirow{-2}{*}{Method}
& PSNR & CLIPCos & Top-1 & GradCos
& PSNR & CLIPCos & Top-1 & GradCos \\
\hline\hline
PromptFL
& 8.992 & 0.501 & 0.750 & 0.859 & 8.862 & 0.057 & 0.063 & 0.032 \\
\rowcolor{gray!10}
FedPGP
& 9.083 & 0.052 & 0.400 & 0.217 & 9.074 & 0.016 & 0.063 & 0.040 \\
FedOTP
& 9.189 & 0.497 & 0.938 & 0.790 & 8.896 & 0.034 & 0.000 & 0.026 \\
\rowcolor{gray!10}
FedPHA
& 8.808 & 0.456 & 0.500 & 0.858 & 8.680 & 0.054 & 0.125 & 0.032 \\
pFedMoAP
& 8.940 & 0.500 & 0.875 & 0.603 & 8.870 & 0.046 & 0.313 & 0.031 \\
\rowcolor{gray!10}
DP-FPL
& 8.986 & 0.459 & 0.938 & 0.362 & 8.720 & 0.066 & 0.188 & 0.039 \\
\hline
\rowcolor{lightblue}
FedSEPT
& 9.620 & 0.529 & 0.938 & 0.236 & 8.952 & -0.013 & 0.000 & 0.052 \\
\noalign{\hrule height 0.8pt}
\end{tabular}
\end{table}

\section{Conclusion}
In this paper, we proposed FedSEPT, a privacy-preserving federated prompt tuning framework for heterogeneous clients. 
By combining subspace-decomposed expert modeling with instance-aware expert fusion, FedSEPT enables privacy-efficient and robust multi-expert collaboration under local DP. 
Extensive experiments on 11 benchmarks show that, under the same privacy budget, FedSEPT achieves a better trade-off between local adaptation and global generalization than existing baselines. 
The ablation study and attack analysis further confirm its effectiveness and privacy robustness.

\newpage   
\section*{Acknowledgments} 
This work was supported by the Beijing Advanced Innovation Center for Future Blockchain and Privacy Computing, 
the National Natural Science Foundation of China (U25B2070, 62372493), 
and the Beijing Natural Science Foundation (Z230001).

\bibliographystyle{ACM-Reference-Format}
\bibliography{ref}

\clearpage
\appendix
\section{Notations} \label{app:notations}
Tab.~\ref{tab:notations} summarizes the notation used throughout this paper.

\begin{table}[hbpt]
\centering
\caption{Summary of key notations.}
\vspace{-2mm}
\label{tab:notations}
\fontsize{8}{9}\selectfont
\renewcommand{\arraystretch}{1.15}
\setlength{\tabcolsep}{1.12pt}
\begin{tabular}{l||p{6cm}}
\noalign{\hrule height 0.8pt}
\rowcolor{gray!25}
\textbf{Notation} & \textbf{Description} \\
\hline\hline
$K, T, E$ & Number of clients, communication rounds, and local epochs. \\
$\mathcal{B}, n_k, C$ & Minibatch, local sample size, and number of classes. \\
$f(\cdot), g(\cdot)$ & Frozen text encoder and image encoder. \\
$\mathbf{P}, \mathbf{P}_g, \mathbf{P}_l$ & Learnable prompt, shared global prompt, and client-specific local prompt. \\
$M, L, d, r$ & Number of experts, prompt length, embedding dimension, and low-rank dimension. \\
$\mathbf{P}_k^{m}, \mathbf{B}_0, \mathbf{R}_k$ & The $m$-th expert prompt, fixed public basis, and client-private residual. \\
$\mathbf{A}_k^{m}, \mathbf{A}, \mathbf{A}_k$ & Expert factor for expert $m$, and global and local expert-factor sets. \\
$\mathbf{v}(x), \mathbf{u}_k, \mathbf{W}_{\mathrm{gate},k}$ & Normalized visual feature, client-specific probe query, and gating matrix. \\
$\boldsymbol{\pi}_k(x), \pi_{k,m}(x), \bar{\boldsymbol{\pi}}_k$ & Client-specific expert weights, weight of expert $m$, and mean routing distribution. \\
$z_{k,c}^{m}, z_{k,c}, \mathbf{t}_{k,c}^{m}$ & Expert-specific logit, fused logit, and normalized text feature for class $c$. \\
$\mathcal{L}_{\mathrm{div}}, \mathcal{L}_{\mathrm{gate}}$ & Expert-diversity and routing regularizers. \\
$\mathcal{L}_{\mathrm{ce}}, \mathcal{L}_{\mathrm{total}}$ & Cross-entropy loss and total local objective. \\
$\epsilon, \delta$ & Privacy budget and failure probability. \\
$\sigma_k, R, S_k$ & Client noise multiplier, clipping threshold, and total DP-SGD steps. \\
\noalign{\hrule height 0.8pt}
\end{tabular}
\end{table}

\section{Experimental Details}

\subsection{Details of Datasets}
\label{app:dataset}


\subsubsection{Fine-grained recognition benchmarks}
We use five standard CLIP evaluation datasets:  \textbf{Food101}, \textbf{Caltech101}, \textbf{Flowers102}, \textbf{DTD}, and \textbf{OxfordPets}. These datasets differ in visual granularity, ranging from generic object categories to fine-grained semantic distinctions, and are therefore suitable for assessing the transferability of learned prompts under heterogeneous client distributions.

\begin{itemize}[leftmargin=1.5em, itemsep=0.2em, topsep=0.2em]

    \item \textbf{Food101}~\cite{bossard2014food} consists of 101 food categories and is challenging due to strong visual similarity across classes as well as substantial variation in presentation, ingredients, and conditions.
    
    \item \textbf{Caltech101}~\cite{10.5555/1032643.1033069} is a classic object recognition benchmark containing 101 object categories together with one background category. It covers a broad range of everyday objects and exhibits moderate intra-class variation.

    \item \textbf{Flowers102}~\cite{flower4756141} contains 102 flower categories with subtle differences in color, petal arrangement, and shape, making it a representative fine-grained classification benchmark.
    
    \item \textbf{DTD}~\cite{10.1109/CVPR.2014.461} (Describable Textures Dataset) contains 47 texture categories annotated by human-describable attributes such as \emph{striped}, \emph{bubbly}, and \emph{zigzagged}. Unlike object-centric benchmarks, DTD emphasizes appearance and texture cues.
    
    \item \textbf{OxfordPets}~\cite{parkhi12a} contains 37 categories of cat and dog breeds. Because many classes differ only in subtle local appearance, it is commonly used to evaluate fine-grained adaptation quality.

\end{itemize}

\subsubsection{Generic object classification benchmarks}
To evaluate performance on standard natural image recognition tasks, we additionally use \textbf{CIFAR-10} and \textbf{CIFAR-100}~\cite{krizhevsky2009learning}. These datasets provide complementary testbeds with relatively simple and more fine-grained label spaces, respectively.

\begin{itemize}[leftmargin=1.5em, itemsep=0.2em, topsep=0.2em]
    \item \textbf{CIFAR-10}~\cite{krizhevsky2009learning} contains 10 natural object classes, including vehicles and animals. Its compact scale and balanced original distribution make it a standard benchmark for controlled federated partitioning.

    \item \textbf{CIFAR-100}~\cite{krizhevsky2009learning} extends CIFAR-10 to 100 classes, yielding substantially finer semantic granularity and a more challenging recognition problem under heterogeneous client distributions.
\end{itemize}

\subsubsection{Multi-domain benchmarks}
To assess robustness under cross-domain variation, we further consider \textbf{PACS}, \textbf{Office31}, \textbf{OfficeHome}, and \textbf{DomainNet}. These benchmarks contain multiple visual domains with distinct styles, acquisition conditions, or abstraction levels, and are thus well suited for evaluating federated adaptation under domain shift.

\begin{itemize}[leftmargin=1.5em, itemsep=0.2em, topsep=0.2em]
    \item \textbf{PACS}~\cite{gong2012geodesic} consists of seven categories from four domains: \emph{Photo}, \emph{Art Painting}, \emph{Cartoon}, and \emph{Sketch}. The large gap between photographic and artistic domains makes it a challenging benchmark for domain generalization.
    
    \item \textbf{Office31}~\cite{saenko2010adapting} contains 31 object categories from three domains: \emph{Amazon}, \emph{DSLR}, and \emph{Webcam}. It is a widely used benchmark for cross-domain object recognition.

    \item \textbf{OfficeHome}~\cite{venkateswara2017deep} contains 65 categories from four domains: \emph{Art}, \emph{Clipart}, \emph{Product}, and \emph{Real-World}. Compared with Office31, it presents larger scale and stronger style diversity.
    
    \item \textbf{DomainNet}~\cite{peng2019moment} is a large-scale benchmark originally containing six domains; following common practice~\cite{yang2025feddda}, we use five domains in our experiments. Its diversity and scale make it one of the most challenging datasets for multi-domain evaluation.
\end{itemize}

\subsection{Details of Baseline Implementation}
\label{app:baseline_hyperparameters}
To ensure a fair and transparent comparison, we re-implemented the baseline methods—FedPGP~\cite{cui2024fedpgp}, FedOTP~\cite{li2024fedotp}, FedPHA~\cite{fang2025fedpha}, and pFedMoAP~\cite{luo2025pFedMoAP}—within a unified FL framework. All baselines strictly follow the experimental protocol detailed in Sec.~\ref{sec:Experiments}, sharing identical data partitioning, training schedules, backbone architectures, and optimization configurations. Beyond these commonalities, each method utilizes its specific default hyperparameters. 

For FedPGP, we use a rank-4 decomposition for personalized prompts. The contrastive loss weight is set to $\mu = 100$ and the temperature to $0.5$. Only the global prompt undergoes server aggregation, while the local low-rank factors remain on-device.
For FedOTP, each client maintains two sets of prompts: a global prompt for server aggregation and a private prompt for local retention. The optimal transport solver defaults to Sinkhorn, configured with \texttt{THRESH} $= 10^{-3}$, \texttt{EPS} $= 0.1$, and a maximum of 100 iterations.
For FedPHA, we set $\alpha = 1.0$, \texttt{ratio} $= 0.8$, and $\lambda_{\text{orth}} = 1$. Only the global prompt is aggregated centrally. The local prompt length is randomly sampled between 4 and 16 for each client and remains strictly personalized.
For pFedMoAP, the number of experts is set to $\min(8, \texttt{num\_users})$, utilizing \texttt{nearest} for sparse expert selection. Additional parameters are set as follows: \texttt{GATING\_HEADS} $= 4$, \texttt{GATING\_EMBED\_DIM} $= 128$, $\lambda = 0.5$, and \texttt{SCALING} $= 10.0$. During training, the server aggregates solely the prompt context, leaving the gating parameters client-specific.
For DP-FPL, we use a rank-$16$ personalized low-rank decomposition with an additional local residual. Only the global prompt is uploaded and aggregated, while all personalized components remain on-device. QR-based re-factorization is applied after each local epoch, and DP clipping and noise are imposed only on the global prompt gradients.

Unless otherwise specified, all remaining hyperparameters are kept
identical across baselines, including
\texttt{CTX\_INIT = False},
\texttt{CSC = False},
\texttt{PREC = fp16}, and
\texttt{CLASS\_TOKEN\_POSITION = end}.
We use the \textbf{Opacus} library to calibrate the noise multiplier
$\sigma_k$ for each client according to its minibatch sampling rate
$q_k=|\mathcal{B}|/n_k$ and the actual number of DP-SGD optimizer
steps $S_k$ accumulated over all rounds in which the client
participates.
For a client that performs a complete pass over its local dataset in
each participating round,
\(S_k
=
T E
\left\lceil
\frac{n_k}{|\mathcal{B}|}
\right\rceil,\)
where $T$ denotes the number of rounds in which client $k$
participates.
All experiments are conducted on two NVIDIA A100 GPUs, each with
80\,GB of memory.

\end{document}